\documentclass[11pt]{article}

\usepackage[utf8]{inputenc}
\usepackage[T1]{fontenc}
\usepackage[numbers,compress]{natbib}
\usepackage{booktabs}
\usepackage{amsmath}
\usepackage{amsfonts}
\usepackage{amsthm}
\usepackage{algorithm}
\usepackage{algpseudocode}
\usepackage{graphicx}
\usepackage{nicefrac}
\usepackage{microtype}
\usepackage{newtxtext}
\usepackage{newtxmath}
\usepackage{tikz}
\usepackage{epstopdf}
\usepackage{xspace}
\usepackage{placeins}
\usepackage{siunitx}
\usetikzlibrary{arrows.meta,calc,decorations.pathreplacing,fit,positioning}
\usepackage{mlpreprint}
\setpreprintheader{Generative Actor-Critic with Soft Bridge Policies}
\setpreprintcode{https://github.com/skypitcher/soft_gac_jax}

\newcommand{\myalgo}{\texttt{SoftGAC}\xspace}

\title{Generative Actor-Critic with Soft Bridge Policies}

\author{%
Ke He$^{1}$
\quad
Le He$^{1}$
\quad
Shunpu Tang$^{2}$ 
\quad
Yafei Wang$^{3}$ 
\quad
Lisheng Fan$^{1}$ \\
$^{1}$ Guangzhou University  {}
$^{2}$ Zhejiang University  {}
$^{3}$ Southeast University
}
\date{}

\begin{document}

\maketitle


\begin{abstract}
Expressive generative policies such as diffusion and flow models are appealing for maximum-entropy (MaxEnt) online reinforcement learning because of their ability to model multimodal and highly non-Gaussian action distributions. However, training effective soft generative policies faces two obstacles that often arise together. First, marginal action densities are often unavailable, so existing methods typically rely on entropy bounds, heuristic proxies or approximations. Second, iterative shared-parameter samplers raise inference cost and require backpropagation through time over repeated network evaluations, increasing memory cost and destabilizing policy optimization. These obstacles motivate us to seek a generative policy that exposes a tractable MaxEnt objective while requiring only a single sampled actor forward pass for action generation. To this end, we propose \emph{soft generative actor-critic} (\myalgo), whose actor defines a stochastic bridge from a fixed base latent to a terminal action latent in pre-tanh space. This structured bridge allows us to lift the MaxEnt objective as an analytically tractable path-wise relative-entropy objective against a high-entropy reference process. In practical finite-step implementation, this relative entropy reduces exactly to sampled transition control energy and thus provides principled soft regularization. Moreover, we keep the single-pass actor lightweight by using small step-specific bridge transitions, each evaluated only once per sampled action, while maintaining a parameter budget comparable to strong actor baselines. Extensive experiments on challenging continuous-control benchmarks show that \myalgo attains higher or competitive returns than strong generative policy baselines, including diffusion and flow-matching policies, while staying in the low-latency regime of one-pass actors and avoiding the much higher cost of many-step diffusion samplers, showing considerable improvements in the compute-return tradeoff.
\end{abstract}

\availabilitybox

\section{Introduction}

Maximum-entropy (MaxEnt) online reinforcement learning (RL) has become a central approach for off-policy continuous-control because it combines value improvement with explicit stochasticity~\citep{levine2018reinforcement, oh2025discovering, haarnoja2018soft, SAC18}. In soft actor-critic (SAC)~\citep{SAC18}, this stochasticity is not only an exploration device. It is part of the optimization objective, where the actor is encouraged to choose high-value actions while maintaining high entropy~\cite{haarnoja2018soft}. This principle is simple and effective when the policy has a tractable density, but the usual Gaussian policy class can be too restrictive for complex control problems that may require multimodal or highly non-Gaussian action distributions~\cite{nauman2024bigger, NormalizingFlowRL24, DDiffPG24, FQL25}.

Expressive generative policies offer a natural way to expand the actor class~\cite{Diffusion20, ScoreDiff21, FlowMatching22,FPO25,DPPO25}. Diffusion, score-based and flow-style policies can represent rich action distributions and have recently become attractive for reinforcement learning~\citep{QSM24,QVPO24,D3P25,FlowRL25,ReinFlow25}. Their appeal, however, creates a tension with MaxEnt RL. The soft actor update is an endpoint-density objective that requires policy entropy or an equivalent density-based regularizer. For many generative actors, the marginal action density is unavailable or expensive to evaluate because the action is obtained after marginalizing over latent variables or generation paths. Existing methods therefore often introduce entropy bounds~\cite{DIME25}, entropy estimators and approximations~\cite{ACER24, QVPO24}, noise-augmented path likelihoods~\cite{SACFlow26} or Wasserstein-geometric proxies~\cite{FLAC26,WPPG26}, which make the connection to MaxEnt RL indirect.

A second difficulty comes from how many generative policies spend computation. Diffusion or flow-style samplers with many denoising or refinement steps usually reuse the same network across time~\cite{DPPO25,DIME25,QSM24,QVPO24}. This raises inference cost at deployment and it also makes the number of function evaluations (NFE) a training-time cost, because the actor update differentiates $Q(s,a)$ through the sampled action and every refinement step that produced it~\cite{OFQL26}. Increasing NFE therefore increases BPTT depth, activation memory and actor-update wall-clock~\cite{fransone}. It can also lengthen gradient paths through repeated uses of shared parameters, which may destabilize actor optimization. As a result, high-NFE policies may achieve stronger performance, but the added optimization burden often limits their marginal gains~\cite{QVPO24}. Conversely, reducing NFE lowers both training and inference cost, but can cause a sharp performance drop~\cite{FQL25,DIME25}.

These two obstacles motivate us to seek a different design principle. Specifically, we seek a generative actor that retains the expressiveness of stochastic latent generation while exposing a tractable soft objective and avoiding a long shared-parameter sampler. The goal is not to replace repeated sampler evaluations with a larger network that hides the cost in parameters. Instead, we ask whether careful actor structure can match or even exceed the performance of high-NFE diffusion policies with a single sampled forward pass and a parameter budget comparable to strong actor baselines. Our key idea is to make the actor a lightweight, explicit path law over latent variables, rather than treating it only through its terminal action distribution. This path-law view addresses the soft-regularization problem by replacing endpoint entropy with path-wise relative entropy to a high-entropy reference process. Its bridge structure addresses the computation problem by using a small number of lightweight step-specific transitions that are evaluated once per action under a compact actor parameter budget, rather than repeatedly applying a shared sampler. The resulting path Kullback-Leibler (KL) divergence refines the endpoint MaxEnt principle because it contains an endpoint KL term to the reference terminal action law. It also adds a principled regularizer on how the actor reaches the terminal action.

We instantiate this idea as \emph{soft generative actor-critic} (\myalgo), an off-policy actor-critic method with soft bridge policies, as demonstrated in Figure~\ref{fig:softgac_overview}. The parameter-efficient actor uses a short sequence of lightweight local Gaussian transitions to form a stochastic bridge in pre-tanh latent space from a fixed base latent to a terminal action latent. These explicit local transitions make the path-wise relative entropy to the reference bridge analytically tractable. For the finite-step bridge used in the algorithm, its trainable part reduces exactly to a sampled transition-control-energy objective. The resulting actor update directly trades off critic value against this sampled control energy, while action generation requires one sampled pass through the bridge blocks.

Our contributions are three-fold. \textbf{(i)} We formulate a path-space soft objective for generative actors, show that it contains the endpoint KL regularizer used by MaxEnt RL as a marginal component and characterize both its unrestricted endpoint-equivalent optimum and the effect of using a practical fixed actor base. \textbf{(ii)} We design a soft bridge policy that uses a short sequence of local Gaussian transitions in pre-tanh latent space, making the path-wise regularizer exactly computable as finite-step control energy and enabling a direct actor update with action generation by a single sampled bridge pass under a comparable actor parameter budget. \textbf{(iii)} We demonstrate on challenging continuous-control benchmarks that \myalgo improves the compute-return tradeoff by pairing strong returns with low one-pass action-generation cost and a parameter budget comparable to strong actor baselines.

\begin{figure}[t]
  \centering
  \includegraphics[width=\linewidth]{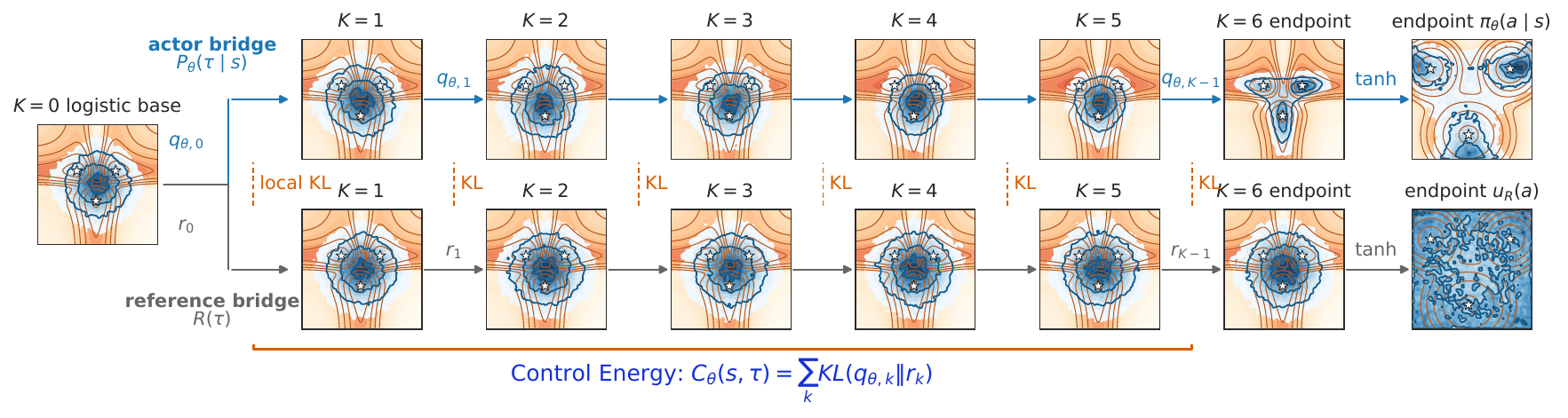}
  \caption{Overview of the proposed soft bridge policy. Local KL terms compare actor and reference transitions and sum to the sampled control energy. The terminal latent is mapped through $\tanh$ and optimized by the critic. Appendix~\ref{app:controlled_2d} provides a 2D bridge visualization.}
  \label{fig:softgac_overview}
\end{figure}

\section{Preliminaries}

\paragraph{Maximum-entropy reinforcement learning.}
We consider a discounted Markov decision process with state space $\mathcal S$, bounded continuous action space $\mathcal A \subset \mathbb R^{d_a}$, reward $r(s,a)$, transition kernel $p(s' \mid s,a)$, discount factor $\gamma \in (0,1)$ and policy $\pi(a \mid s)$. Maximum-entropy reinforcement learning augments the expected return with an entropy bonus,
\begin{equation}
  J(\pi)
  =
  \mathbb E_\pi
  \left[
    \sum_{t=0}^{\infty}
    \gamma^t
    \left(
      r(s_t,a_t)
      + \alpha \mathcal H(\pi(\cdot \mid s_t))
    \right)
  \right],
\end{equation}
where $\alpha>0$ controls the strength of the soft regularizer. In an off-policy actor-critic algorithm, a critic estimates a soft value landscape and the actor is improved by increasing value while preserving stochasticity. When $\mathcal A$ is bounded and $u$ denotes the uniform action law, the standard per-state soft actor improvement step instantiated by SAC~\cite{SAC18} can be written as
\begin{equation}
  \mathcal J_{\mathrm{SAC}}(\pi \mid s)
  =
  \mathbb E_{a\sim\pi(\cdot\mid s)}[Q(s,a)]
  -
  \alpha D_{\mathrm{KL}}(\pi(\cdot\mid s)\,\|\,u)
  + \mathrm{const}.
  \label{eq:sac_endpoint_kl}
\end{equation}
This form will be useful below because it separates the value term from a regularizer that keeps the endpoint action distribution close to a high-entropy reference. Explicit-density policies such as standard SAC can evaluate this objective because the policy density is available. The difficulty begins when the policy is a generative sampler whose endpoint density is not directly exposed.

\paragraph{Generative actors as path laws.}
Many recent generative policies naturally fit a common path-law view. A diffusion actor~\cite{Diffusion20,DDiffPG24,DIME25} starts from base noise and samples a reverse denoising chain,
\begin{equation}
  y_{k-1}\sim p_\theta(y_{k-1}\mid y_k,s),
  \qquad
  y_K\sim\mathcal N(0,I),
\end{equation}
while a flow or flow-matching actor~\cite{FlowMatching22,FPO25,ReinFlow25} evolves base noise through a learned velocity field,
\begin{equation}
  \frac{d y_t}{d t}=v_\theta(y_t,t,s),
  \qquad
  y_0\sim p_0.
\end{equation}
Beyond multi-step implicit generators, one-shot implicit actors form a degenerate short-path case, where $z_0\sim p_0$ is mapped by a neural generator to a terminal latent $z_1=f_\theta(z_0,s)$~\cite{WPPG26}. These actors all generate an action by following a latent path from base noise to a terminal latent state. For the rest of the paper, we use the forward ordering $\tau=(z_0,z_1,\ldots,z_K)$, where $z_0$ denotes the base latent and $z_K$ denotes the terminal action latent. The actor samples this path from a path law $P_\theta(d\tau\mid s)$ and produces an action through a terminal map $a=T(z_K)$. In this paper, $T$ will map the terminal latent state to a bounded action through a tanh transform, but we only need the induced endpoint law $\pi_\theta(\cdot\mid s) = (T\circ z_K)_{\#}P_\theta(\cdot\mid s)$, namely the distribution of $T(z_K)$ when $\tau\sim P_\theta(\cdot\mid s)$. Such an actor can be easy to sample from while still having an intractable marginal action density. The density of $\pi_\theta(a\mid s)$ may require integrating over all latent paths that terminate at the same action, which is generally unavailable for implicit generators and costly for multi-step diffusion or flow samplers.

\section{Maximum-Entropy Reinforcement Learning in Path Space}

\subsection{Path-Space Objective}

When the terminal action density is unavailable, a natural approach is to lift MaxEnt RL from the terminal action distribution to the full generation path. Since a generative actor samples a latent path $\tau$ before producing a terminal action, we define the soft objective over path laws. Let $R(d\tau)$ be a high-entropy reference path law in the same latent space and let $u_R$ be its terminal action law induced by $T(z_K)$. For a fixed state $s$ and critic $Q$, we consider
\begin{equation}
  \mathcal J_{\mathrm{path}}(P_\theta \mid s)
  =
  \mathbb E_{\tau\sim P_\theta(\cdot\mid s)}
  \left[
    Q(s,T(z_K))
  \right]
  -
  \alpha
  D_{\mathrm{KL}}
  \left(
    P_\theta(\cdot\mid s)
    \,\|\,R
  \right).
  \label{eq:path_soft_objective}
\end{equation}
The value term still evaluates only the terminal action. The regularizer, however, now compares the full generation path to a high-entropy reference process. This path-space objective is useful only if it remains connected to the endpoint MaxEnt objective. We show this connection by decomposing the path KL into a terminal-action term and a conditional path term.

\begin{proposition}[Path-space lift of the endpoint KL regularizer]
Let $a=T(z_K)$ be the terminal action. Suppose the actor path law and the reference path law admit disintegrations
\begin{equation}
  P_\theta(d\tau\mid s)
  =
  \pi_\theta(da\mid s)\,P_\theta(d\tau\mid s,a),
  \qquad
  R(d\tau)
  =
  u_R(da)\,R(d\tau\mid a).
\end{equation}
Then
\begin{equation}
  D_{\mathrm{KL}}
  \left(
    P_\theta(\cdot\mid s)
    \,\|\,R
  \right)
  =
  D_{\mathrm{KL}}
  \left(
    \pi_\theta(\cdot\mid s)
    \,\|\,u_R
  \right)
  +
  \mathbb E_{a\sim\pi_\theta}
  \left[
    D_{\mathrm{KL}}
    \left(
      P_\theta(\cdot\mid s,a)
      \,\|\,R(\cdot\mid a)
    \right)
  \right].
  \label{eq:path_kl_decomposition}
\end{equation}
\end{proposition}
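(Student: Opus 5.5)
This is the chain rule for relative entropy applied to the measurable map $\tau\mapsto a=T(z_K)$. We prove it through Radon--Nikodym derivatives. First handle the degenerate case. If $P_\theta(\cdot\mid s)\not\ll R$, then either $\pi_\theta(\cdot\mid s)\not\ll u_R$, or, on a set of actions of positive $\pi_\theta$-measure, $P_\theta(\cdot\mid s,a)\not\ll R(\cdot\mid a)$. In either situation both sides equal $+\infty$. Conversely, if the right-hand side is infinite, the left-hand side must be infinite too. So we may assume absolute continuity throughout.

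Assume therefore that $P_\theta(\cdot\mid s)\ll R$. Pushing forward by $T\circ z_K$ preserves absolute continuity, so $\pi_\theta(\cdot\mid s)\ll u_R$. Set $f(a)=\frac{d\pi_\theta(\cdot\mid s)}{du_R}(a)$. The disintegrations are unique up to null sets, so the conditional laws satisfy $P_\theta(\cdot\mid s,a)\ll R(\cdot\mid a)$ for $\pi_\theta$-a.e.\ $a$. Set $g(\tau\mid a)=\frac{dP_\theta(\cdot\mid s,a)}{dR(\cdot\mid a)}(\tau)$, and choose a version of $g$ that is jointly measurable in $(a,\tau)$.

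The key step is to verify the factorisation
\begin{equation*}
\frac{dP_\theta(\cdot\mid s)}{dR}(\tau)=f(T(z_K))\,g(\tau\mid T(z_K))\quad R\text{-a.e.}
\end{equation*}
To check it, integrate the right-hand side against $R$ over an arbitrary measurable set $B$ of paths, using the disintegration $R(d\tau)=u_R(da)R(d\tau\mid a)$. The conditional law $R(\cdot\mid a)$ is concentrated on the fibre $\{T(z_K)=a\}$, so the integral becomes $\int u_R(da)\,f(a)\int_B g(\tau\mid a)\,R(d\tau\mid a)$. This simplifies to $\int \pi_\theta(da\mid s)\,P_\theta(B\mid s,a)=P_\theta(B\mid s)$, as required.

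Next, take logarithms and integrate against $P_\theta(\cdot\mid s)$. The quantity $\mathbb E_{P_\theta}[\log f(T(z_K))]$ depends on $\tau$ only through $a$, so it equals $\mathbb E_{\pi_\theta}[\log f]=D_{\mathrm{KL}}(\pi_\theta\|u_R)$. Disintegrating $P_\theta$ in the remaining term gives $\mathbb E_{a\sim\pi_\theta}\big[\int\log g(\tau\mid a)\,P_\theta(d\tau\mid s,a)\big]$, which is the expected conditional KL.

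The main obstacle is justifying the split of the expectation of the log sum when the terms might be $\pm\infty$ or non-integrable. We handle it with the standard trick. The negative parts of $\log f$ and $\log g$ have finite integrals, because $x\log x\ge -1/e$ integrated against the reference measure gives $\mathbb E_P[(\log f)^-]\le 1/e$, and likewise for $g$. Both terms are therefore quasi-integrable, and the split is legitimate in $(-\infty,+\infty]$. This also covers the degenerate infinite cases noted at the start. All the measure-theoretic requirements are met because the latent spaces are Euclidean, hence standard Borel, so regular conditional distributions exist and jointly measurable versions can be chosen.
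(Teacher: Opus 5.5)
Your proof is correct and follows the same route as the paper's: factor $dP_\theta/dR$ into the endpoint derivative $d\pi_\theta/du_R$ times the conditional derivative $dP_\theta(\cdot\mid s,a)/dR(\cdot\mid a)$, take logs, and integrate under $P_\theta$ by disintegrating over $a$. The paper simply asserts the factorization and ignores integrability, so your additions are useful: the test-set verification using concentration of $R(\cdot\mid a)$ on the fibre, the treatment of $P_\theta\not\ll R$, and the $x\log x\ge -1/e$ quasi-integrability argument. One small point: the a.e.\ absolute continuity of the conditionals is cleanest if you first exhibit the version $\frac{1}{f(a)}\frac{dP_\theta}{dR}(\tau)\,R(d\tau\mid a)$ and only then invoke uniqueness of disintegrations.
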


The proof is a direct chain-rule decomposition of relative entropy and appears in Appendix~\ref{app:proof_path_lift}. This decomposition shows that the path KL contains the endpoint KL regularizer as its marginal component. When $u_R$ is uniform over the bounded action space, the first term in Eq.~\eqref{eq:path_kl_decomposition} is exactly the uniform-prior endpoint regularizer in Eq.~\eqref{eq:sac_endpoint_kl} up to a constant. The second term is the additional structure introduced by the path-space lift. It regularizes how the actor generates an action, not only which terminal action distribution it induces. Thus the path objective is stricter than endpoint entropy while still preserving the endpoint regularizer of MaxEnt RL.

\subsection{Unrestricted Soft-Optimal Bridge}

We next ask whether this stricter objective changes the ideal soft-optimal endpoint action distribution. If we optimize over all path laws that are absolutely continuous with respect to the reference, the answer is no in the ideal uniform-reference case.

\begin{theorem}[Unrestricted soft-optimal bridge]
For fixed $s$, $Q$ and $\alpha>0$, consider the unrestricted optimization of Eq.~\eqref{eq:path_soft_objective} over path laws $P$ with $P\ll R$. Assume
\begin{equation}
  Z_Q(s)
  =
  \mathbb E_{\tau\sim R}
  \left[
    \exp
    \left(
      Q(s,T(z_K))/\alpha
    \right)
  \right]
  <
  \infty.
\end{equation}
The unique optimum is the value-tilted reference path law
\begin{equation}
  \frac{dP_Q^\star}{dR}(\tau\mid s)
  =
  \frac{
    \exp
    \left(
      Q(s,T(z_K))/\alpha
    \right)
  }{
    Z_Q(s)
  }.
  \label{eq:unrestricted_tilted_bridge}
\end{equation}
Its terminal action law is
\begin{equation}
  \pi_Q^\star(a\mid s)
  =
  \frac{
    u_R(a)
    \exp
    \left(
      Q(s,a)/\alpha
    \right)
  }{
    \int_{\mathcal A}
    u_R(b)
    \exp
    \left(
      Q(s,b)/\alpha
    \right)
    db
  }.
  \label{eq:unrestricted_endpoint}
\end{equation}
\end{theorem}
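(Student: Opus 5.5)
The plan is the Gibbs variational principle (Donsker--Varadhan) applied directly on path space. Write $f(\tau)=Q(s,T(z_K))$ and define $P_Q^\star$ by Eq.~\eqref{eq:unrestricted_tilted_bridge}; the assumption $Z_Q(s)<\infty$, together with $Z_Q(s)>0$ (the integrand is positive), makes $P_Q^\star$ a well-defined probability measure equivalent to $R$. For any $P\ll R$ with $D_{\mathrm{KL}}(P\|R)<\infty$ and $\mathbb E_P[f]$ well defined, the chain rule for Radon--Nikodym derivatives gives $P\ll P_Q^\star$ and
\begin{equation}
  \log\frac{dP}{dR}=\log\frac{dP}{dP_Q^\star}+\frac{f}{\alpha}-\log Z_Q(s).
\end{equation}
Taking $P$-expectations yields the key identity
\begin{equation}
  \mathcal J_{\mathrm{path}}(P\mid s)=\alpha\log Z_Q(s)-\alpha D_{\mathrm{KL}}(P\|P_Q^\star).
\end{equation}
Since $D_{\mathrm{KL}}\ge 0$ with equality iff $P=P_Q^\star$, the objective is maximized exactly at $P_Q^\star$, and the maximizer is unique. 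Paths with $D_{\mathrm{KL}}(P\|R)=\infty$ give objective $-\infty$ (or are excluded), so they cannot be optimal.

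The main obstacle is the integrability bookkeeping needed to split the expectation. I would first restrict to $P$ with finite $D_{\mathrm{KL}}(P\|R)$. Then $\mathbb E_P[f^+]<\infty$ follows from the Donsker--Varadhan/Young inequality $\mathbb E_P[g]\le D_{\mathrm{KL}}(P\|R)+\log\mathbb E_R[e^{g}]$ applied with $g=f^+/\alpha$. Here $\mathbb E_R[e^{f^+/\alpha}]\le 1+Z_Q(s)<\infty$. Hence $\mathbb E_P[f]\in[-\infty,\infty)$ is well defined, and the identity holds in the extended reals. If $\mathbb E_P[f]=-\infty$, both sides equal $-\infty$. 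We also need $P_Q^\star$ itself to have finite objective. This holds because $\mathbb E_{P^\star}[f]$ is finite: $x e^{x}$ is bounded below, and $\mathbb E_R[f^+e^{f/\alpha}]$ is controlled. If this term is infinite, the identity still shows that $P^\star$ dominates every competitor. I would note that the paper likely intends bounded $Q$, where all of this is immediate.

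For the terminal law, I would push $P_Q^\star$ forward through $\tau\mapsto a=T(z_K)$. Since the density $dP_Q^\star/dR$ depends on $\tau$ only through $a$, for any measurable $B\subseteq\mathcal A$,
\begin{equation}
  P_Q^\star(T(z_K)\in B)=Z_Q(s)^{-1}\,\mathbb E_R\bigl[\mathbf 1_B(a)e^{Q(s,a)/\alpha}\bigr]=Z_Q(s)^{-1}\int_B e^{Q(s,a)/\alpha}\,u_R(da).
\end{equation}
By the same change of variables, $Z_Q(s)=\int_{\mathcal A}u_R(b)e^{Q(s,b)/\alpha}\,db$, which gives Eq.~\eqref{eq:unrestricted_endpoint} when $u_R$ has a density. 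Equivalently, in the disintegration of the preceding Proposition, $P_Q^\star(d\tau\mid s,a)=R(d\tau\mid a)$. The conditional KL term then vanishes, and the optimum coincides with the endpoint soft-optimal (Boltzmann) policy.
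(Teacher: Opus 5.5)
Your proposal is correct and follows the paper's proof essentially step for step. The Gibbs variational identity $\mathcal J_{\mathrm{path}}(P\mid s)=\alpha\log Z_Q(s)-\alpha D_{\mathrm{KL}}(P\,\|\,P_Q^\star)$ gives optimality and uniqueness, and the fact that the tilt depends on $\tau$ only through $a=T(z_K)$ yields the endpoint law; your pushforward computation is equivalent to the paper's disintegration argument. Your integrability bookkeeping is extra care the paper omits, but one aside is wrong: $Z_Q(s)<\infty$ alone does not bound $\mathbb E_R[f^+e^{f/\alpha}]$. So finiteness of the objective at $P_Q^\star$ genuinely depends on your fallback remark, or on bounded $Q$ (as with the paper's fixed-support categorical critic).
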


The proof follows from the Gibbs variational identity and appears in Appendix~\ref{app:proof_unrestricted_bridge}. Since the value tilt in Eq.~\eqref{eq:unrestricted_tilted_bridge} depends only on the terminal action, the optimal conditional path law given that action remains the reference conditional bridge $R(d\tau\mid a)$. Value changes the endpoint marginal, while the reference controls how each endpoint is reached. When $u_R$ is uniform, Eq.~\eqref{eq:unrestricted_endpoint} recovers the usual Boltzmann endpoint policy proportional to $\exp(Q(s,a)/\alpha)$. The unrestricted path-space objective is therefore \emph{endpoint-equivalent} to the MaxEnt actor update in the ideal reference limit, but it is more selective about the generation path.

\subsection{Fixed-Base Bridge Choice}

The unrestricted optimum gives a clean conceptual target, but every implementable generative actor also needs to specify how its latent path starts. A diffusion sampler, flow sampler or one-shot generator all begin from some base law. This base law is a practical modeling choice. If the actor base differs from the reference base, the path KL gains an initial-law term before the remaining conditional path is compared to the reference. For a fixed actor base, this initial term is constant with respect to the conditional path law after $z_0$. Thus the base choice does not change the variational problem solved after conditioning on the base sample. In finite implementations, it mainly acts as an inductive bias by shaping the initial latents from which the rest of the generative path is drawn. To characterize this effect, let the reference path law disintegrate as
\begin{equation}
  R(d\tau)
  =
  r_0(dz_0)\,
  R(d\tau_{1:K}\mid z_0),
\end{equation}
and fix an actor base law $p_0$ with $p_0\ll r_0$. We optimize the same path objective as Eq.~\eqref{eq:path_soft_objective}, but restrict the actor path law to satisfy $P_0=p_0$. The value tilt still points toward the unrestricted soft optimum in Theorem~1, while the actor can only search within the selected fixed-base family. The best attainable solution is therefore the constrained optimum below.

\begin{theorem}[Fixed-base bridge optimum]
\label{thm:fixed_base_optimum}
Define
\begin{equation}
  Z_Q(s,z_0)
  =
  \mathbb E_{\tau\sim R(\cdot\mid z_0)}
  \left[
    \exp
    \left(
      Q(s,T(z_K))/\alpha
    \right)
  \right],
  \qquad
  \bar Z_Q(s)
  =
  \mathbb E_{z_0\sim r_0}
  \left[
    Z_Q(s,z_0)
  \right].
\end{equation}
Among all path laws with initial marginal $p_0$, the unique optimum is
\begin{equation}
  P_{Q,p_0}^\star(d\tau\mid s)
  =
  p_0(dz_0)\,
  \frac{
    \exp
    \left(
      Q(s,T(z_K))/\alpha
    \right)
  }{
    Z_Q(s,z_0)
  }
  R(d\tau_{1:K}\mid z_0).
  \label{eq:shared_base_optimum}
\end{equation}
Its objective value is
\begin{equation}
  \alpha
  \mathbb E_{z_0\sim p_0}
  \left[
    \log Z_Q(s,z_0)
  \right]
  -
  \alpha
  D_{\mathrm{KL}}
  \left(
    p_0
    \,\|\,
    r_0
  \right).
  \label{eq:fixed_base_objective}
\end{equation}
\end{theorem}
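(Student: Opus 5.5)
The plan is to disintegrate both the actor and reference path laws at the initial latent $z_0$. Then I apply the chain rule for relative entropy, exactly as in the proof of Proposition~1 but conditioning on $z_0$ instead of $a$. After that, I solve the remaining conditional problem pointwise in $z_0$ with the Gibbs variational identity that was used for Theorem~1.

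Concretely, any admissible path law with initial marginal $p_0$ can be written as $P(d\tau\mid s)=p_0(dz_0)\,P(d\tau_{1:K}\mid s,z_0)$. Since $p_0\ll r_0$ and we may restrict to $P\ll R$ (otherwise the objective is $-\infty$), the chain rule gives
\begin{equation*}
  D_{\mathrm{KL}}(P\,\|\,R)
  =
  D_{\mathrm{KL}}(p_0\,\|\,r_0)
  +
  \mathbb E_{z_0\sim p_0}\bigl[D_{\mathrm{KL}}(P(\cdot\mid s,z_0)\,\|\,R(\cdot\mid z_0))\bigr].
\end{equation*}
The value term splits the same way, as $\mathbb E_{z_0\sim p_0}\mathbb E_{\tau_{1:K}\sim P(\cdot\mid s,z_0)}[Q(s,T(z_K))]$. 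The objective therefore becomes $-\alpha D_{\mathrm{KL}}(p_0\|r_0)$ plus $\mathbb E_{z_0\sim p_0}[\mathcal J_{z_0}(P(\cdot\mid s,z_0))]$. Here $\mathcal J_{z_0}(q)=\mathbb E_q[Q(s,T(z_K))]-\alpha D_{\mathrm{KL}}(q\,\|\,R(\cdot\mid z_0))$, and the first term is constant over the admissible family.

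For each fixed $z_0$ with $Z_Q(s,z_0)<\infty$, the Gibbs identity gives
\begin{equation*}
  \mathcal J_{z_0}(q)=\alpha\log Z_Q(s,z_0)-\alpha D_{\mathrm{KL}}(q\,\|\,q^\star_{z_0}),
\end{equation*}
where $dq^\star_{z_0}/dR(\cdot\mid z_0)=\exp(Q/\alpha)/Z_Q(s,z_0)$. This identity is verified by expanding $\log(dq/dR)=\log(dq/dq^\star)+Q/\alpha-\log Z_Q$. The conditional objective is maximized exactly when $q=q^\star_{z_0}$, and the maximum value is $\alpha\log Z_Q(s,z_0)$. Gluing these conditionals with $p_0$ yields Eq.~\eqref{eq:shared_base_optimum}, and integrating the optimal values yields Eq.~\eqref{eq:fixed_base_objective}. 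Uniqueness follows because the total gap equals $\alpha\,\mathbb E_{z_0\sim p_0}[D_{\mathrm{KL}}(P(\cdot\mid s,z_0)\|q^\star_{z_0})]$. This vanishes only if the conditionals agree $p_0$-a.e., and that determines $P$ uniquely since the marginal is fixed.

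The main obstacle is measure-theoretic bookkeeping rather than the algebra. First, I need $Z_Q(s,z_0)<\infty$ for $p_0$-a.e.\ $z_0$. The assumption in Theorem~1 gives $\bar Z_Q(s)<\infty$, hence $Z_Q(s,z_0)<\infty$ for $r_0$-a.e.\ $z_0$, and $p_0\ll r_0$ transfers this to $p_0$-a.e. Second, I need existence and measurable dependence of the regular conditional laws, which is standard on Polish latent spaces. Third, the manipulation of $\infty-\infty$ must be legitimate. Here I would either assume $\mathbb E_{z_0\sim p_0}[\log Z_Q(s,z_0)]$ is finite, or argue with the one-sided bound $\mathcal J_{z_0}\le\alpha\log Z_Q$ and Jensen's inequality, $\mathbb E_{p_0}\log Z_Q\le\log\mathbb E_{p_0}Z_Q$, being careful since $p_0\ne r_0$. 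The quantity $\bar Z_Q$ appears only as this integrability device.
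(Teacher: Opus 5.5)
Your proposal is correct and follows the paper's proof essentially step for step: you disintegrate both laws at $z_0$, split $D_{\mathrm{KL}}(P\,\|\,R)$ into $D_{\mathrm{KL}}(p_0\,\|\,r_0)$ plus the $p_0$-averaged conditional KL, and apply the Gibbs variational identity to each conditional problem, with uniqueness holding $p_0$-a.e. Your integrability remarks go beyond what the paper checks, but for the upper bound on $\mathbb E_{z_0\sim p_0}[\log Z_Q(s,z_0)]$ the right tool is the Donsker--Varadhan inequality $\mathbb E_{p_0}[\log Z_Q]\le \log\bar Z_Q(s)+D_{\mathrm{KL}}(p_0\,\|\,r_0)$ (exactly $\Delta(s)\ge 0$ in the Corollary), not Jensen under $p_0$, because $\mathbb E_{p_0}[Z_Q]$ need not be finite.
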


The proof applies the Gibbs variational identity conditionally on the fixed base latent and appears in Appendix~\ref{app:proof_fixed_base_optimum}. The initial KL term is independent of the conditional optimizer after $z_0$. Thus using a different fixed base is not a heuristic break from the objective. It changes the full path objective by a constant and changes the finite actor's inductive bias through the initial samples. When the fixed actor base matches the value-tilted initial marginal of the unrestricted bridge, this constrained optimum coincides with the unrestricted one. Otherwise, the actor does not reweight the initial latent distribution in a state-dependent way, and the corollary quantifies the resulting gap.

\begin{corollary}[Base-constraint gap controls endpoint deviation]
Under the assumptions of Theorem~\ref{thm:fixed_base_optimum}, the loss relative to the unrestricted optimum is
\begin{equation}
  \Delta(s)
  =
  \alpha
  \left(
    \log \bar Z_Q(s)
    -
    \mathbb E_{z_0\sim p_0}
    \left[
      \log Z_Q(s,z_0)
    \right]
    +
    D_{\mathrm{KL}}
    \left(
      p_0
      \,\|\,
      r_0
    \right)
  \right)
  \ge 0.
  \label{eq:base_jensen_gap}
\end{equation}
Equality holds if and only if $p_0$ equals the initial marginal of the unrestricted tilted bridge. In the special case $p_0=r_0$, this reduces to $Z_Q(s,z_0)$ being constant for $p_0$-almost every $z_0$. Moreover,
\begin{equation}
  D_{\mathrm{KL}}
  \left(
    P_{Q,p_0}^\star(\cdot\mid s)
    \,\|\,P_Q^\star(\cdot\mid s)
  \right)
  =
  \frac{
    \Delta(s)
  }{
    \alpha
  },
  \label{eq:shared_base_path_bound}
\end{equation}
and the induced endpoint deviation obeys
\begin{equation}
  D_{\mathrm{KL}}
  \left(
    \pi_{Q,p_0}^\star(\cdot\mid s)
    \,\|\,\pi_Q^\star(\cdot\mid s)
  \right)
  \le
  \frac{
    \Delta(s)
  }{
    \alpha
  }.
  \label{eq:shared_base_endpoint_bound}
\end{equation}
\end{corollary}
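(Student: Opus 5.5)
We take the optimal values from Theorem~1 and Theorem~\ref{thm:fixed_base_optimum}, subtract them to get $\Delta(s)$, and then rewrite $\Delta(s)/\alpha$ as a KL divergence between the two optimizers. The Gibbs identity in Theorem~1 gives the unrestricted optimal value $\alpha\log Z_Q(s)$. Conditioning on $z_0$ shows $Z_Q(s)=\mathbb E_{r_0}[Z_Q(s,z_0)]=\bar Z_Q(s)$. Subtracting Eq.~\eqref{eq:fixed_base_objective} then yields the displayed formula for $\Delta(s)$.

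For nonnegativity and the equality case, define the tilted initial marginal
\begin{equation*}
  q^\star(dz_0)=\frac{r_0(dz_0)\,Z_Q(s,z_0)}{\bar Z_Q(s)},
\end{equation*}
which is exactly the $z_0$-marginal of $P_Q^\star$. Then
\begin{equation*}
  D_{\mathrm{KL}}(p_0\,\|\,q^\star)
  = D_{\mathrm{KL}}(p_0\,\|\,r_0)
  - \mathbb E_{p_0}[\log Z_Q(s,z_0)]
  + \log\bar Z_Q(s)
  = \Delta(s)/\alpha .
\end{equation*}
So $\Delta(s)\ge 0$, with equality if and only if $p_0=q^\star$. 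When $p_0=r_0$, the condition $q^\star=r_0$ means $Z_Q(s,z_0)/\bar Z_Q(s)=1$ for $r_0$-almost every $z_0$, that is, $Z_Q(s,\cdot)$ is almost surely constant.

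For Eq.~\eqref{eq:shared_base_path_bound}, we compare the two path laws directly. Both $P^\star_{Q,p_0}$ and $P^\star_Q$ disintegrate over $z_0$ with the same conditional law
\begin{equation*}
  \frac{\exp(Q(s,T(z_K))/\alpha)}{Z_Q(s,z_0)}\,R(d\tau_{1:K}\mid z_0).
\end{equation*}
To see this for $P_Q^\star$, write its density $e^{Q/\alpha}/\bar Z_Q$ with respect to $R=r_0\,R(\cdot\mid z_0)$ and factor out $q^\star$. The chain rule for KL, as used in the proof of the Path-space lift Proposition, then gives
\begin{equation*}
  D_{\mathrm{KL}}(P^\star_{Q,p_0}\,\|\,P^\star_Q)
  = D_{\mathrm{KL}}(p_0\,\|\,q^\star) + 0
  = \Delta(s)/\alpha .
\end{equation*}
Finally, Eq.~\eqref{eq:shared_base_endpoint_bound} follows from the data-processing inequality applied to the measurable map $\tau\mapsto T(z_K)$, which pushes the two path laws forward to $\pi^\star_{Q,p_0}$ and $\pi^\star_Q$.

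The main obstacle is measure-theoretic. We must check that $p_0\ll r_0$ implies $p_0\ll q^\star$, which holds because $Z_Q(s,z_0)>0$ everywhere. We must also justify the disintegrations and handle possibly infinite $\mathbb E_{p_0}[\log Z_Q]$. Finiteness of $\bar Z_Q$ and Jensen's inequality give an upper bound on this expectation. If the expectation is $-\infty$, both sides of each identity equal $+\infty$, and the statements hold trivially.
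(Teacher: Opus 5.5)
Your proposal is correct and follows essentially the same route as the paper. You identify $Z_Q(s)=\bar Z_Q(s)$, subtract the two optimal values, and recognize $\Delta(s)/\alpha$ as $D_{\mathrm{KL}}(p_0\,\|\,q^\star)$ for the tilted initial marginal. You then get the path identity from the shared conditional law given $z_0$, which the paper phrases as a direct ratio of the two Radon--Nikodym derivatives against $R$; this is the same computation. Finally, data processing under $\tau\mapsto T(z_K)$ gives the endpoint bound. Your measure-theoretic remarks go beyond the paper, which does not treat these edge cases. One tidy-up there: the finite upper bound comes from Jensen applied to $\log\bigl(Z_Q(s,z_0)\,dr_0/dp_0\bigr)$ under $p_0$, which yields $\mathbb E_{p_0}[\log Z_Q(s,z_0)]\le\log\bar Z_Q(s)+D_{\mathrm{KL}}(p_0\,\|\,r_0)$; Jensen on $\log Z_Q$ alone would need $\mathbb E_{p_0}[Z_Q]<\infty$, which does not follow from $\bar Z_Q(s)<\infty$.
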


The proof is given in Appendix~\ref{app:proof_base_gap}. The unrestricted optimum can tilt the initial latent law by $Z_Q(s,z_0)$, which would require a state-dependent base sampler before any bridge transition. A fixed-base actor avoids this sampler. Its cost is $\Delta(s)$, and we show that this same cost controls the endpoint deviation. The bound is small unless $Z_Q(s,z_0)$ varies sharply across likely base latents, meaning some base regions are much better positioned to reach high-value actions than others. In the regime we target, typical base samples can be transported to useful endpoints. The base choice then acts mainly as a practical inductive bias, while the bridge transitions perform value-guided transport. For a finite-step implementation, the reference terminal law is $u_R$ rather than the ideal uniform action law. The endpoint-MaxEnt connection is exact with respect to $u_R$ and recovers the uniform-prior view when $u_R=u$. What \myalgo optimizes is more concrete. For the chosen actor-reference pair, the finite-step path KL is the exact regularizer up to the fixed initial-base constant. We now need an architecture that exposes this KL from sampled paths.

\section{Path-Regularized Generative Actor-Critic}

The preceding section turns soft policy improvement into a path-law optimization problem with a fixed base distribution. This leaves an architectural question. How can the actor expose the path likelihood needed by the regularizer while remaining cheap to sample? We answer this question with \myalgo, a path-regularized actor-critic that combines a short bridge actor with an off-policy soft critic. The bridge actor makes the finite-step path KL factor into local Gaussian terms. These terms will become the sampled control energy used by the actor and critic updates.

\subsection{Soft Bridge Policies}

A soft bridge policy is a finite Markov chain in pre-tanh latent space. It starts from a fixed base law and applies $K$ lightweight Gaussian residual transitions before mapping the terminal latent state to the bounded action. This gives an explicit path law and avoids the long shared-parameter sampler used by high-NFE iterative policies. We write its transition law as
\begin{equation}
  z_0\sim p_0,
  \qquad
  q_{\theta,k}(z_{k+1}\mid z_k,s)
  =
  \mathcal N
  \left(
    z_k+h\,u_{\theta,k}(s,z_k),
    2h\,\mathrm{diag}(\sigma_{\theta,k}^2(s,z_k))
  \right),
  \label{eq:bridge_actor_kernel}
\end{equation}
for $k=0,\ldots,K-1$ and $h=1/K$. The terminal action is $a=T(z_K)=\tanh(z_K)$ after the usual affine rescaling to the environment action bounds. Each transition block has its own parameters and communicates with the next block only through the action-dimensional latent state $z_k$. Thus the forward pass is a stochastic generative process, but it is still a fixed-depth one-pass network rather than an iterative sampler that reuses the same refinement model many times. During actor updates, gradients pass through this short sequence of transition blocks instead of a long BPTT graph over repeated uses of a shared sampler.

The reference bridge plays the role of a high-entropy action prior. In normalized action space $(-1,1)^{d_a}$, the pre-tanh density whose tanh image is uniform is
\begin{equation}
  q_{\mathrm{ref}}(z)
  =
  \frac{1}{2^{d_a}}
  \prod_{i=1}^{d_a}
  \mathrm{sech}^2(z_i),
  \qquad
  \nabla_z\log q_{\mathrm{ref}}(z)
  =
  -2\tanh(z).
\end{equation}
For the main implementation, we set the actor base law to this reference density, $p_0=q_{\mathrm{ref}}$. This keeps the full path KL free of an initial constant and gives the cleanest decomposition below. Other fixed bases are also valid. They add a parameter-independent initial KL to the full objective and mainly act as an implementation-level inductive bias through the base samples. The finite-step reference starts from the logistic reference base and uses an Euler Gaussian kernel based on the score above,
\begin{equation}
  r_k(z_{k+1}\mid z_k)
  =
  \mathcal N
  \left(
    z_k-2h\tanh(z_k),
    2hI
  \right).
  \label{eq:bridge_reference_kernel}
\end{equation}
The continuous reference diffusion is stationary when initialized from $q_{\mathrm{ref}}$, so its tanh image is uniform in action space at every time. The finite-step reference is the object used by the algorithm. The path KL below is exact for this bridge pair, while Appendix~\ref{app:finite_reference_bias} quantifies its endpoint bias.

\begin{lemma}[Finite-step control-energy decomposition]
\label{lem:control_energy}
Let
\begin{equation}
  P_\theta(d\tau\mid s)
  =
  p_0(dz_0)
  \prod_{k=0}^{K-1}
  q_{\theta,k}(dz_{k+1}\mid z_k,s),
  \qquad
  R(d\tau)
  =
  r_0(dz_0)
  \prod_{k=0}^{K-1}
  r_k(dz_{k+1}\mid z_k).
\end{equation}
Then
\begin{equation}
  D_{\mathrm{KL}}
  \left(
    P_\theta(\cdot\mid s)
    \,\|\,R
  \right)
  =
  D_{\mathrm{KL}}
  \left(
    p_0
    \,\|\,
    r_0
  \right)
  +
  \sum_{k=0}^{K-1}
  \mathbb E_{\tau\sim P_\theta}
  \left[
    D_{\mathrm{KL}}
    \left(
      q_{\theta,k}(\cdot\mid z_k,s)
      \,\|\,r_k(\cdot\mid z_k)
    \right)
  \right].
  \label{eq:local_kl_sum}
\end{equation}
For Gaussian local kernels, define $\mathcal C_\theta(s,\tau)$ as the sum of the local Gaussian KL terms. Then
\begin{equation}
  D_{\mathrm{KL}}
  \left(
    P_\theta(\cdot\mid s)
    \,\|\,R
  \right)
  =
  D_{\mathrm{KL}}
  \left(
    p_0
    \,\|\,
    r_0
  \right)
  +
  \mathbb E_{\tau\sim P_\theta(\cdot\mid s)}
  \left[
    \mathcal C_\theta(s,\tau)
  \right].
  \label{eq:path_kl_control_energy}
\end{equation}
\end{lemma}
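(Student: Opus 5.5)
The plan is to apply the chain rule for relative entropy to Markov path laws, then evaluate each local term in closed form. Both $P_\theta(\cdot\mid s)$ and $R$ are products of an initial law and $K$ transition kernels on $\tau=(z_0,\ldots,z_K)$. Assuming $p_0\ll r_0$, both laws have densities with respect to Lebesgue measure on $\mathbb R^{d_a(K+1)}$. The Gaussian kernels $q_{\theta,k}$ and $r_k$ have full-support densities whenever $\sigma_{\theta,k}^2>0$. Hence $P_\theta\ll R$, and the Radon--Nikodym derivative factorizes as
\begin{equation*}
  \log\frac{dP_\theta}{dR}(\tau)
  =
  \log\frac{p_0(z_0)}{r_0(z_0)}
  +
  \sum_{k=0}^{K-1}
  \log\frac{q_{\theta,k}(z_{k+1}\mid z_k,s)}{r_k(z_{k+1}\mid z_k)}.
\end{equation*}

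Taking the expectation under $P_\theta$ gives the path KL. The first summand depends only on $z_0\sim p_0$, so its expectation is $D_{\mathrm{KL}}(p_0\,\|\,r_0)$. For the $k$-th summand I will use the tower property, conditioning on $z_{0:k}$. By the Markov structure, $z_{k+1}$ given $z_{0:k}$ has law $q_{\theta,k}(\cdot\mid z_k,s)$. The inner conditional expectation is therefore exactly $D_{\mathrm{KL}}(q_{\theta,k}(\cdot\mid z_k,s)\,\|\,r_k(\cdot\mid z_k))$, and the outer expectation is over the $P_\theta$-marginal of $z_k$. Summing over $k$ yields Eq.~\eqref{eq:local_kl_sum}. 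This is the same chain-rule decomposition used for the path-lift proposition, iterated over the $K$ transitions.

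The main technical point is justifying that the expectation of the sum splits into a sum of expectations, i.e.\ integrability of each log-ratio. I would handle this in one of two ways. Each local KL is nonnegative, so Tonelli applies once the terms are arranged as nonnegative conditional KLs, and the identity holds in $[0,\infty]$. Alternatively, I would impose mild conditions: $u_{\theta,k}$ has at most linear growth and $\sigma_{\theta,k}$ is bounded above and away from zero. Under these conditions, together with the bounded drift $-2\tanh$ of the reference and the exponential tails of $q_{\mathrm{ref}}$, every moment that appears is finite.

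For Eq.~\eqref{eq:path_kl_control_energy}, I then insert the standard closed form for the KL between diagonal Gaussians. The actor kernel has mean $\mu_q=z_k+hu_{\theta,k}$ and covariance $2h\,\mathrm{diag}(\sigma_{\theta,k}^2)$; the reference kernel has mean $\mu_r=z_k-2h\tanh(z_k)$ and covariance $2hI$. The local KL is
\begin{equation*}
  \frac12\sum_{i=1}^{d_a}\Bigl(\sigma_{\theta,k,i}^2-1-\log\sigma_{\theta,k,i}^2\Bigr)
  +\frac{h}{4}\bigl\|u_{\theta,k}(s,z_k)+2\tanh(z_k)\bigr\|^2,
\end{equation*}
where the factor $h/4$ comes from $\|\mu_q-\mu_r\|^2/(2\cdot 2h)$ with $\mu_q-\mu_r=h\,(u_{\theta,k}+2\tanh(z_k))$. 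Defining $\mathcal C_\theta(s,\tau)$ as the sum of these terms over $k$ and using linearity of expectation finishes the proof. With $p_0=q_{\mathrm{ref}}=r_0$, the initial term vanishes.
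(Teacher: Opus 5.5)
Your proposal is correct and follows essentially the same route as the paper's proof. You factorize the Markov path likelihood ratio into the initial-base term plus the $K$ local transition log-ratios, turn each local term into an expected local KL by conditioning on the past, and insert the diagonal-Gaussian closed form $\frac12\sum_i(\sigma_{\theta,k,i}^2-1-\log\sigma_{\theta,k,i}^2)+\frac{h}{4}\|u_{\theta,k}(s,z_k)+2\tanh(z_k)\|^2$, which agrees with Eq.~\eqref{eq:local_control_cost}.

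Your integrability discussion goes beyond the paper, which ignores the issue. The growth-condition route is airtight as stated. The $[0,\infty]$ route needs slightly more than Tonelli, because the pointwise log-ratios are signed. It can be completed with the bound $\mathbb E_q[(\log(q/r))^-]\le 1/e$ on negative parts, or simply by citing the general chain rule for relative entropy.
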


The proof is a factorization of the Markov path likelihood ratio and appears in Appendix~\ref{app:proof_control_energy}. When $p_0=r_0$, the initial KL vanishes. When $p_0$ is a different fixed base, this initial term is constant with respect to the actor transitions, so actor training can use the same sampled transition control energy. For the residual actor in Eq.~\eqref{eq:bridge_actor_kernel} and the reference in Eq.~\eqref{eq:bridge_reference_kernel}, the local control cost is
\begin{equation}
  \mathcal C_k(s,z_k)
  =
  \frac{1}{2}
  \sum_{i=1}^{d_a}
  \left[
    \sigma_{\theta,k,i}^2(s,z_k)
    +
    \frac{
      \left(
        \mu_{\theta,k,i}(s,z_k)
        -
        \mu_{R,k,i}(z_k)
      \right)^2
    }{
      2h
    }
    -
    1
    -
    \log \sigma_{\theta,k,i}^2(s,z_k)
  \right],
  \label{eq:local_control_cost}
\end{equation}
where $\mu_{\theta,k}(s,z_k)=z_k+h\,u_{\theta,k}(s,z_k)$ and $\mu_{R,k}(z_k)=z_k-2h\tanh(z_k)$. The sampled path cost is $\mathcal C_\theta(s,\tau)=\sum_{k=0}^{K-1}\mathcal C_k(s,z_k)$. This quantity is the finite-step transition control energy of the bridge path, and its expectation is the actor-reference path KL up to the fixed initial-base constant. The soft regularizer in \myalgo is therefore an analytical path-wise relative entropy rather than an estimate, bound or proxy for endpoint action entropy. The next subsection uses this finite-step path regularizer inside an off-policy actor-critic update.

\subsection{Off-Policy Actor-Critic Training}

We train this bridge actor with an off-policy critic in \myalgo. For a fixed bridge policy, define the path-regularized soft value
\begin{equation}
  V^\pi(s)
  =
  \mathbb E_{\tau\sim P_\theta(\cdot\mid s)}
  \left[
    Q^\pi(s,T(z_K))
    -
    \alpha\,\mathcal C_\theta(s,\tau)
  \right].
  \label{eq:soft_bridge_value}
\end{equation}
The corresponding Bellman equation is
\begin{equation}
  Q^\pi(s,a)
  =
  r(s,a)
  +
  \gamma
  \mathbb E_{s'}
  \left[
    V^\pi(s')
  \right].
  \label{eq:soft_bridge_bellman}
\end{equation}
Thus the only change from a standard soft actor-critic update is that endpoint entropy is replaced by sampled bridge control energy. Given a replay batch $(s,a,r,s',d)$, we sample a next-state bridge path $\tau'\sim P_{\bar\theta}(\cdot\mid s')$ from the target actor and form the soft bootstrap scalar
\begin{equation}
  y
  =
  r
  +
  \gamma(1-d)
  \left[
    Q_{\bar\phi}^{\min}(s',T(z'_K))
    -
    \alpha\,\mathcal C_{\bar\theta}(s',\tau')
  \right].
  \label{eq:soft_bridge_target}
\end{equation}
The critic can be any off-policy estimator trained toward this target. Our implementation uses a twin categorical critic with a fixed support $\mathcal Z=\{v_1,\ldots,v_M\}$ and a CrossQ-style update~\cite{bellemare2017distributional, bhatt2024crossq,DIME25}. For each critic head, the target distribution is obtained by shifting the support by the sampled soft bootstrap and projecting back to $\mathcal Z$ with the usual categorical projection. This critic choice is an implementation detail rather than a requirement of the bridge objective. The actor is updated by sampling current-state paths $\tau\sim P_\theta(\cdot\mid s)$ and minimizing
\begin{equation}
  \mathcal L_{\mathrm{actor}}(\theta)
  =
  \mathbb E_{s\sim\mathcal D}
  \mathbb E_{\tau\sim P_\theta(\cdot\mid s)}
  \left[
    \alpha\,\mathcal C_\theta(s,\tau)
    -
    Q_\phi^{\min}(s,T(z_K))
  \right].
  \label{eq:softgac_actor_loss}
\end{equation}
At a fixed state, this sampled objective has a precise projection interpretation. It moves the transition actor toward the ideal soft-optimal bridge, while the chosen fixed base determines the practical actor family being searched.
\begin{proposition}[Actor update as restricted projection to the ideal bridge]
\label{prop:actor_reverse_kl}
Let $P_Q^\star$ be the unrestricted soft-optimal bridge in Theorem~1. For any finite-step bridge actor with fixed base law $p_0\ll r_0$ that satisfies Lemma~\ref{lem:control_energy},
\begin{equation}
  \mathcal L_{\mathrm{actor}}(\theta\mid s)
  =
  \mathbb E_{\tau\sim P_\theta(\cdot\mid s)}
  \left[
    \alpha\,\mathcal C_\theta(s,\tau)
    -
    Q(s,T(z_K))
  \right]
\end{equation}
satisfies
\begin{equation}
  \mathcal L_{\mathrm{actor}}(\theta\mid s)
  =
  \alpha
  D_{\mathrm{KL}}
  \left(
    P_\theta(\cdot\mid s)
    \,\|\,P_Q^\star(\cdot\mid s)
  \right)
  -
  \alpha\log Z_Q(s)
  -
  \alpha
  D_{\mathrm{KL}}
  \left(
    p_0
    \,\|\,
    r_0
  \right).
  \label{eq:actor_reverse_kl}
\end{equation}
Consequently, minimizing this loss over the unrestricted fixed-base path-law family has global minimizer $P_{Q,p_0}^\star$ from Theorem~\ref{thm:fixed_base_optimum}.
\end{proposition}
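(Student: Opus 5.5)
The proof is a direct algebraic combination of Lemma~\ref{lem:control_energy} with the Radon--Nikodym form of the unrestricted optimum in Theorem~1, followed by an appeal to Theorem~\ref{thm:fixed_base_optimum} for the minimizer. Throughout I fix $s$ and write $Q$ for the critic entering both the loss and the tilt, so $Q^{\min}_\phi$ plays the role of $Q$.

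\emph{Step 1: rewrite the control energy as a path KL.} Lemma~\ref{lem:control_energy}, Eq.~\eqref{eq:path_kl_control_energy}, gives
\[
\mathbb E_{\tau\sim P_\theta}\bigl[\alpha\,\mathcal C_\theta(s,\tau)\bigr]
=\alpha D_{\mathrm{KL}}\bigl(P_\theta(\cdot\mid s)\,\|\,R\bigr)-\alpha D_{\mathrm{KL}}(p_0\,\|\,r_0).
\]
Hence
\[
\mathcal L_{\mathrm{actor}}(\theta\mid s)=\alpha D_{\mathrm{KL}}(P_\theta\,\|\,R)-\mathbb E_{P_\theta}[Q(s,T(z_K))]-\alpha D_{\mathrm{KL}}(p_0\,\|\,r_0).
\]

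\emph{Step 2: rewrite the KL to the ideal bridge.} Because $P_\theta\ll R$ and $P_Q^\star\sim R$ (its density $e^{Q/\alpha}/Z_Q$ is strictly positive), the chain rule for densities gives
\[
\log\frac{dP_\theta}{dP_Q^\star}=\log\frac{dP_\theta}{dR}-\frac{Q(s,T(z_K))}{\alpha}+\log Z_Q(s).
\]
Taking the $P_\theta$-expectation and multiplying by $\alpha$,
\[
\alpha D_{\mathrm{KL}}(P_\theta\,\|\,P_Q^\star)=\alpha D_{\mathrm{KL}}(P_\theta\,\|\,R)-\mathbb E_{P_\theta}[Q]+\alpha\log Z_Q(s).
\]
Substituting this into Step 1 yields Eq.~\eqref{eq:actor_reverse_kl} directly.

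\emph{Step 3: identify the minimizer.} Over path laws with initial marginal fixed to $p_0$, the two terms $\alpha\log Z_Q(s)$ and $\alpha D_{\mathrm{KL}}(p_0\,\|\,r_0)$ are constants. By Step 1, minimizing $\mathcal L_{\mathrm{actor}}$ is therefore equivalent to maximizing $\mathcal J_{\mathrm{path}}$ in Eq.~\eqref{eq:path_soft_objective} over that family. Theorem~\ref{thm:fixed_base_optimum} identifies the unique maximizer as $P^\star_{Q,p_0}$. As a consistency check, the optimal loss value equals $\Delta(s)-\alpha\log\bar Z_Q(s)$, using the corollary's identity $D_{\mathrm{KL}}(P^\star_{Q,p_0}\,\|\,P^\star_Q)=\Delta(s)/\alpha$ together with $Z_Q(s)=\bar Z_Q(s)$.

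\emph{Main obstacle.} The algebra is routine; the delicate part is integrability. The difference-of-expectations manipulation in Step 2 is valid only if $\mathbb E_{P_\theta}[Q]$ is finite, so that no $\infty-\infty$ arises. I would assume $Q(s,\cdot)$ is bounded on the bounded action set, or at least $P_\theta$-integrable, which makes $\mathbb E_{P_\theta}[Q]$ finite. The case $D_{\mathrm{KL}}(P_\theta\,\|\,R)=\infty$ then makes both sides of Eq.~\eqref{eq:actor_reverse_kl} equal to $+\infty$ consistently. For the Gaussian bridge, the KL is finite whenever $u_{\theta,k}$ and $\log\sigma_{\theta,k}$ have finite second moments along paths, so the identity holds as an equality of finite numbers in the cases of interest.
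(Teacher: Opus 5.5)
Your proof is correct and follows the paper's argument essentially step for step. Lemma~\ref{lem:control_energy} turns the expected control energy into $D_{\mathrm{KL}}(P_\theta\,\|\,R)-D_{\mathrm{KL}}(p_0\,\|\,r_0)$, and the reverse KL to $P_Q^\star$ is then expanded using $\log(dP_Q^\star/dR)=Q/\alpha-\log Z_Q(s)$. Theorem~\ref{thm:fixed_base_optimum} supplies the minimizer, and your integrability caveats are a welcome addition that the paper omits.

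One small slip is in your side consistency check. Plugging $D_{\mathrm{KL}}(P^\star_{Q,p_0}\,\|\,P^\star_Q)=\Delta(s)/\alpha$ into the identity gives the optimal loss $\Delta(s)-\alpha\log\bar Z_Q(s)-\alpha D_{\mathrm{KL}}(p_0\,\|\,r_0)=-\alpha\,\mathbb E_{z_0\sim p_0}[\log Z_Q(s,z_0)]$. Your expression drops the base-KL term, so it is only right when $p_0=r_0$.
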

The proof appears in Appendix~\ref{app:proof_actor_projection}. The global-minimizer statement is a property of the unrestricted fixed-base path-law family. The finite neural Gaussian Markov actor optimized by SGD is a restricted parameterization, so the result should not be read as a global optimization guarantee for the implemented network. Equation~\eqref{eq:actor_reverse_kl} is nevertheless useful algorithmically. Once the base law is fixed, the final two terms are constants with respect to the transition actor parameters, and the sampled actor loss follows a reverse-KL projection direction toward the ideal bridge. In the implemented neural family, the loss defines a tractable finite-step objective whose attainable solution is limited by parameterization and optimization. 

In addition, we tune the temperature with a SAC-style dual update on the control-energy budget. Let $\mathcal C_{\mathrm{target}} = \rho_{\mathrm{ctrl}} K d_a$ be a per-step, per-dimension heuristic target cost. Appendix~\ref{app:rho_ctrl_choice} gives the scale intuition behind this choice. The temperature objective is
\begin{equation}
  \mathcal L_\alpha
  =
  \mathbb E
  \left[
    \alpha
    \left(
      \mathcal C_{\mathrm{target}}
      -
      \mathcal C_\theta(s,\tau)
    \right)
  \right],
  \qquad
  \alpha=\exp(\log\alpha).
  \label{eq:alpha_loss}
\end{equation}
This keeps the average bridge deviation from the high-entropy reference near a chosen budget. At deployment, the policy uses the same bridge forward pass but discards the control-energy bookkeeping. Action generation remains fixed-cost with exactly $K$ local transition blocks. The complete training loop is given in Algorithm~\ref{alg:softgac} in Appendix~\ref{app:pseudocode}.

\section{Experiments}
\paragraph{Experimental setup.} 
We evaluate \myalgo on challenging high-dimensional continuous-control tasks from the DeepMind Control Suite (DMC)~\cite{tassa2018deepmind,dm_control2020} and HumanoidBench~\cite{sferrazza2024humanoidbench}. Our goal is to measure not only return, but also the compute-return tradeoff against strong recent generative actor-critic baselines. The baselines include diffusion and flow-matching policies, represented by FLAC~\cite{FLAC26}, DIME~\cite{DIME25}, FlowRL~\cite{FlowRL25}, QSM~\cite{QSM24} and QVPO~\cite{QVPO24}. In addition, we also include CrossQ-SAC~\cite{bhatt2024crossq} which is a strong unimodal Gaussian policy baseline. We report interquartile mean (IQM)~\cite{agarwal2021deep} over $8$ random seeds with $95\%$ bootstrap confidence intervals, and include ablations that remove the soft path-space regularizer to verify its contribution to the performance.

We re-implement all baselines in a single codebase under the framework of StableBaselines3~\cite{stable-baselines3}, and we follow the official implementations whenever available. The comparison is unified in infrastructure and comparable parameter budget, and all methods share identical main critic update. \myalgo uses $K=6$ bridge transitions, where each local Gaussian transition is represented by a lightweight single-layer MLP with mean and variance heads. In particular, we adjust actor sizes so that the methods have comparable actor parameter budgets, while the reported latency measures the complete action-generation computation for each actor. Appendix~\ref{app:experimental_setup} and ~\ref{app:more_results} give the full implementation details, hyperparameters, experimental setup,  extended results and ablations.

\begin{figure}[!htbp]
  \centering
  \includegraphics[width=0.9\linewidth]{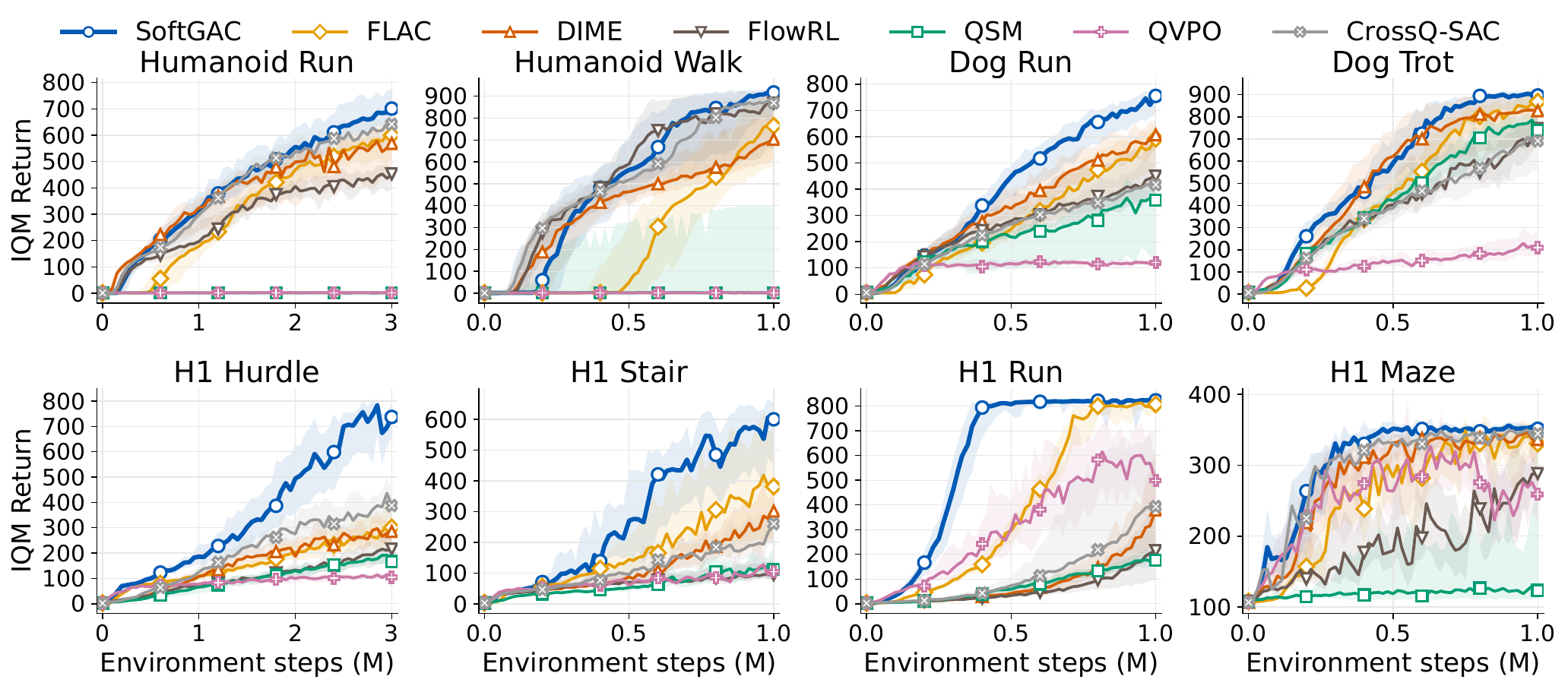}
  \caption{IQM learning curves on the 8 hard control tasks.}
  \label{fig:iqm_learning_curves_hard8}
\end{figure}

\paragraph{Performance.}
Figure~\ref{fig:iqm_learning_curves_hard8} shows that \myalgo gives the strongest overall performance on the hard tasks. The gains are largest on high-dimensional, long-horizon locomotion tasks such as Humanoid Run, Dog Run, H1 Hurdle and H1 Stair, where \myalgo improves over both generative baselines and the CrossQ-SAC Gaussian baseline. On tasks where CrossQ-SAC is already strong, such as Humanoid Walk and H1 Maze, \myalgo remains competitive or better while retaining a richer generative actor. These results suggest that the soft bridge actor improves sample efficiency and final performance, rather than only increasing expressiveness at convergence.

\begin{figure}[!htbp]
  \centering
  \includegraphics[width=0.89\linewidth]{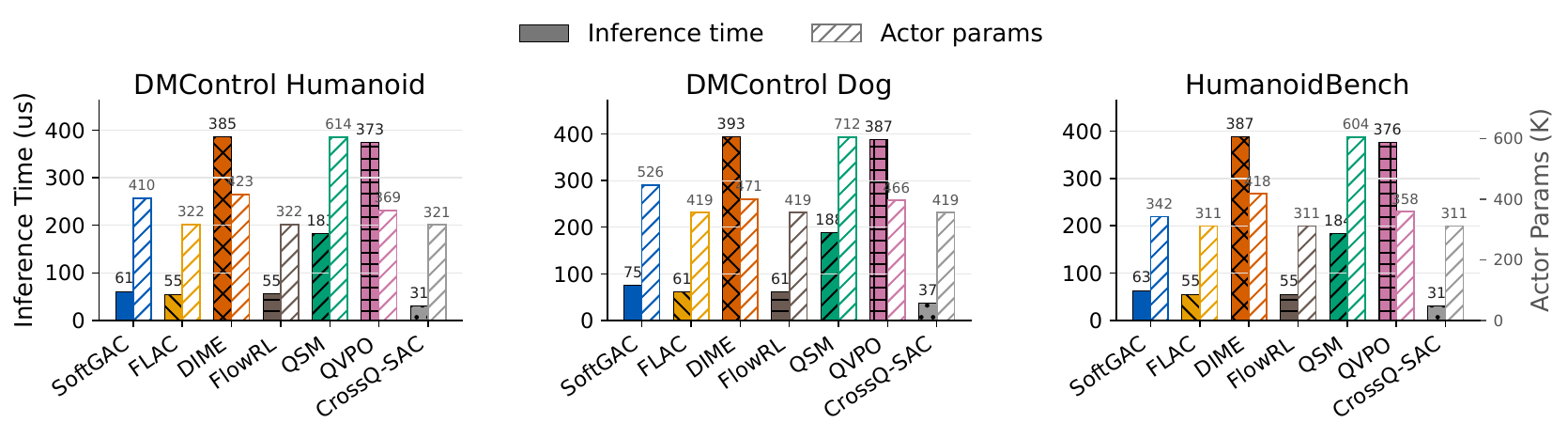}
  \caption{Per-action inference time and actor param count measured on Apple M3 Pro.}
  \label{fig:inference_time_by_domain}
\end{figure}

\paragraph{Inference cost.}
Figure~\ref{fig:inference_time_by_domain} shows that this return gain does not come from unrolled sampler evaluations. CrossQ-SAC has the cheapest unimodal Gaussian actor. \myalgo uses about $61$--$75\,\mu$s per action on the CPU benchmark, which places it in the same low-latency range as one-step flow baselines and far below high-NFE diffusion baselines. This matters because actor sampling is used both during environment interaction and inside actor-critic updates. The soft bridge policy generates each action with one sampled pass through its bridge blocks and keeps the actor parameter budget comparable, while still offering superior performance. This explains the improved compute-return tradeoff observed in Figures~\ref{fig:iqm_learning_curves_hard8} and~\ref{fig:inference_time_by_domain}.

\begin{figure}[!htbp]
  \centering
  \includegraphics[width=0.95\linewidth]{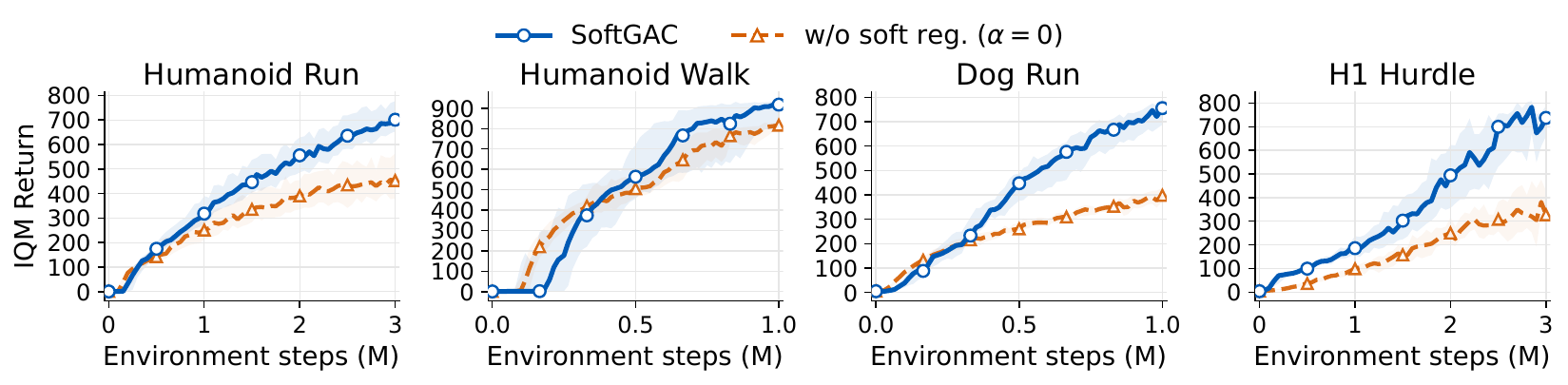}
  \caption{Ablation study of the soft regularizer.}
  \label{fig:soft_regularizer_ablation_curves}
\end{figure}

\paragraph{Soft regularization.}
Figure~\ref{fig:soft_regularizer_ablation_curves} isolates the path-space soft regularizer by setting $\alpha=0$ while keeping the same actor and critic. Removing this term consistently hurts Humanoid Run, Humanoid Walk, Dog Run and H1 Hurdle, which suggests that the gains do not come only from the bridge parameterization. The result supports the relative-entropy control energy as a principled soft objective for the one-pass generator, rather than only an exploration heuristic.

\section{Related Work and Discussion}

Expressive actors beyond diagonal Gaussians, including normalizing-flow, diffusion, score-based and flow-matching policies, have been explored for off-policy continuous-control RL~\citep{NormalizingFlowRL24,DDiffPG24,QSM24,QVPO24,FlowRL25, ReinFlow25}. Since the terminal action density of an implicit generator is usually unavailable, existing soft generative policies often rely on surrogates: DIME uses an entropy bound~\citep{DIME25}, DACER estimates entropy~\citep{ACER24}, SAC-Flow uses noise-augmented rollout likelihood~\citep{SACFlow26}, and QVPO adds a variational-style approximation~\citep{QVPO24}. In particular, FLAC is closest in motivation because it casts MaxEnt RL as a generalized Schr\"odinger bridge problem, but its practical kinetic-energy regularizer is a geometric proxy that does not by itself imply a high-entropy action distribution~\citep{FLAC26}. Moreover, WPPG introduces entropy through a heat-flow step after Wasserstein proximal transport, where the $W_2$ term remains an action-space proximal metric~\citep{WPPG26}. In contrast, \myalgo directly optimizes the finite-step actor-reference path KL, which reduces to sampled transition control energy for Gaussian bridges. On the other hand, efficiency-oriented methods such as FQL and One-Step FQL remove or distill iterative action generation and recursive backpropagation through the sampler~\citep{FQL25,OFQL26}. \myalgo shares the goal of low-cost action generation, but derives the single-pass actor from the soft objective itself so that the regularizer remains tractable with a compact actor architecture.

\section{Conclusion and Limitation}
We presented \myalgo, a generative actor-critic method that lifts maximum-entropy regularization from endpoint actions to the full latent generation path. This path-space view yields an analytical relative-entropy regularizer, implemented as sampled transition control energy for Gaussian bridges. It also gives a single-pass actor that avoids repeatedly applying a high-NFE shared sampler and long BPTT through that sampler. Across challenging locomotion benchmarks, \myalgo achieves higher or competitive returns while remaining in the low-latency regime of one-pass actors, which supports soft bridge policies as a practical design for efficient generative actor-critic learning.

While promising, the method has some limitations. The practical actor uses finite Gaussian bridge transitions, so its endpoint connection to a uniform action prior depends on the reference discretization and base law. Although lightweight, it also introduces architecture choices such as bridge depth, base distribution and target control-energy budget.

\section*{Acknowledgments}
This work was supported in part by the Google TPU Research Cloud (TRC) program, which provided access to TPU compute resources for the experiments.

\bibliographystyle{unsrtnat}
\bibliography{ref}

\clearpage

\tableofcontents

\clearpage

\appendix

\section{Theorem Proofs}
\label{app:proofs}

\subsection{Proof of Proposition~1}
\label{app:proof_path_lift}

\begin{restatement}
Let $a=T(z_K)$ be the terminal action. If
\[
  P_\theta(d\tau\mid s)
  =
  \pi_\theta(da\mid s)\,P_\theta(d\tau\mid s,a),
  \qquad
  R(d\tau)
  =
  u_R(da)\,R(d\tau\mid a),
\]
then
\[
  D_{\mathrm{KL}}
  \left(
    P_\theta(\cdot\mid s)
    \,\|\,R
  \right)
  =
  D_{\mathrm{KL}}
  \left(
    \pi_\theta(\cdot\mid s)
    \,\|\,u_R
  \right)
  +
  \mathbb E_{a\sim\pi_\theta}
  \left[
    D_{\mathrm{KL}}
    \left(
      P_\theta(\cdot\mid s,a)
      \,\|\,R(\cdot\mid a)
    \right)
  \right].
\]

\end{restatement}

\begin{proof}
By the stated disintegrations and the terminal map $a=T(z_K)$, the Radon-Nikodym derivative factorizes into an endpoint term and a conditional path term:
\begin{equation}
  \log
  \frac{
    dP_\theta(\tau\mid s)
  }{
    dR(\tau)
  }
  =
  \log
  \frac{
    d\pi_\theta(a\mid s)
  }{
    du_R(a)
  }
  +
  \log
  \frac{
    dP_\theta(\tau\mid s,a)
  }{
    dR(\tau\mid a)
  }.
\end{equation}
This is the chain rule for relative entropy written at the level of path laws. Taking expectation under $P_\theta(\cdot\mid s)$ and then disintegrating the expectation by $a\sim\pi_\theta(\cdot\mid s)$ gives
\begin{align}
  D_{\mathrm{KL}}
  \left(
    P_\theta(\cdot\mid s)
    \,\|\,R
  \right)
  &=
  \mathbb E_{a\sim\pi_\theta}
  \left[
    \log
    \frac{
      d\pi_\theta(a\mid s)
    }{
      du_R(a)
    }
  \right]
  \nonumber\\
  &\quad+
  \mathbb E_{a\sim\pi_\theta}
  \mathbb E_{\tau\sim P_\theta(\cdot\mid s,a)}
  \left[
    \log
    \frac{
      dP_\theta(\tau\mid s,a)
    }{
      dR(\tau\mid a)
    }
  \right].
\end{align}
The first term is $D_{\mathrm{KL}}(\pi_\theta(\cdot\mid s)\,\|\,u_R)$. For each terminal action $a$, the inner expectation in the second term is $D_{\mathrm{KL}}(P_\theta(\cdot\mid s,a)\,\|\,R(\cdot\mid a))$. This proves Eq.~\eqref{eq:path_kl_decomposition}.
\end{proof}

\subsection{Proof of Theorem~1}
\label{app:proof_unrestricted_bridge}

\begin{restatement}
For fixed $s$, $Q$ and $\alpha>0$, optimize Eq.~\eqref{eq:path_soft_objective} over all path laws $P\ll R$. If
\[
  Z_Q(s)
  =
  \mathbb E_{\tau\sim R}
  \left[
    \exp
    \left(
      Q(s,T(z_K))/\alpha
    \right)
  \right]
  <
  \infty,
\]
then the unique optimum is
\[
  \frac{dP_Q^\star}{dR}(\tau\mid s)
  =
  \frac{
    \exp
    \left(
      Q(s,T(z_K))/\alpha
    \right)
  }{
    Z_Q(s)
  }.
\]
Its terminal action law is
\[
  \pi_Q^\star(a\mid s)
  =
  \frac{
    u_R(a)
    \exp
    \left(
      Q(s,a)/\alpha
    \right)
  }{
    \int_{\mathcal A}
    u_R(b)
    \exp
    \left(
      Q(s,b)/\alpha
    \right)
    db
  }.
\]

\end{restatement}

\begin{proof}
Let $f_s(\tau)=Q(s,T(z_K))$. Since $Z_Q(s)<\infty$, the density in Eq.~\eqref{eq:unrestricted_tilted_bridge} integrates to one under $R$, so it defines a valid path law $P_Q^\star$ with $P_Q^\star\ll R$. For any admissible path law $P\ll R$,
\begin{equation}
  D_{\mathrm{KL}}
  \left(
    P
    \,\|\,P_Q^\star
  \right)
  =
  \mathbb E_{\tau\sim P}
  \left[
    \log\frac{dP}{dR}(\tau)
    -
    \frac{f_s(\tau)}{\alpha}
    +
    \log Z_Q(s)
  \right].
\end{equation}
Rearranging yields the Gibbs variational identity
\begin{equation}
  \mathbb E_{\tau\sim P}[f_s(\tau)]
  -
  \alpha
  D_{\mathrm{KL}}
  \left(
    P
    \,\|\,R
  \right)
  =
  \alpha\log Z_Q(s)
  -
  \alpha
  D_{\mathrm{KL}}
  \left(
    P
    \,\|\,P_Q^\star
  \right).
\end{equation}
The left-hand side is exactly the unrestricted path objective evaluated at $P$. The right-hand side is upper bounded by $\alpha\log Z_Q(s)$ because relative entropy is nonnegative. The upper bound is attained if and only if $D_{\mathrm{KL}}(P\,\|\,P_Q^\star)=0$, which holds if and only if $P=P_Q^\star$ almost surely. This proves both optimality and uniqueness.

To compute the terminal marginal, disintegrate the reference as $R(d\tau)=u_R(da)R(d\tau\mid a)$. Since $f_s(\tau)=Q(s,a)$ depends only on the terminal action,
\begin{equation}
  P_Q^\star(da\mid s)
  =
  \frac{
    \exp(Q(s,a)/\alpha)
  }{
    Z_Q(s)
  }
  u_R(da).
\end{equation}
The normalizer can be written as
\begin{equation}
  Z_Q(s)
  =
  \int_{\mathcal A}
  u_R(b)
  \exp(Q(s,b)/\alpha)
  db,
\end{equation}
because $\int R(d\tau\mid b)=1$ for every terminal action $b$. Substituting this expression gives Eq.~\eqref{eq:unrestricted_endpoint}.
\end{proof}

\subsection{Proof of Theorem~2}
\label{app:proof_fixed_base_optimum}

\begin{restatement}
Let
\[
  Z_Q(s,z_0)
  =
  \mathbb E_{\tau\sim R(\cdot\mid z_0)}
  \left[
    \exp
    \left(
      Q(s,T(z_K))/\alpha
    \right)
  \right],
  \qquad
  \bar Z_Q(s)
  =
  \mathbb E_{z_0\sim r_0}
  \left[
    Z_Q(s,z_0)
  \right].
\]
Among all path laws with initial marginal $p_0$, the unique optimum is
\[
  P_{Q,p_0}^\star(d\tau\mid s)
  =
  p_0(dz_0)\,
  \frac{
    \exp
    \left(
      Q(s,T(z_K))/\alpha
    \right)
  }{
    Z_Q(s,z_0)
  }
  R(d\tau_{1:K}\mid z_0).
\]
Its objective value is
\[
  \alpha
  \mathbb E_{z_0\sim p_0}
  \left[
    \log Z_Q(s,z_0)
  \right]
  -
  \alpha
  D_{\mathrm{KL}}
  \left(
    p_0
    \,\|\,
    r_0
  \right).
\]

\end{restatement}

\begin{proof}
Any feasible fixed-base law can be written as
\begin{equation}
  P(d\tau\mid s)
  =
  p_0(dz_0)
  P(d\tau_{1:K}\mid z_0,s).
\end{equation}
The reference has the corresponding decomposition
\begin{equation}
  R(d\tau)
  =
  r_0(dz_0)
  R(d\tau_{1:K}\mid z_0).
\end{equation}
The likelihood ratio therefore separates into an initial-base term and a conditional path term:
\begin{equation}
  \log
  \frac{
    dP
  }{
    dR
  }(\tau\mid s)
  =
  \log
  \frac{
    dp_0
  }{
    dr_0
  }(z_0)
  +
  \log
  \frac{
    dP(\cdot\mid z_0,s)
  }{
    dR(\cdot\mid z_0)
  }(\tau_{1:K}).
\end{equation}
Taking expectation under $P$ gives
\begin{equation}
  D_{\mathrm{KL}}
  \left(
    P
    \,\|\,R
  \right)
  =
  D_{\mathrm{KL}}
  \left(
    p_0
    \,\|\,
    r_0
  \right)
  +
  \mathbb E_{z_0\sim p_0}
  \left[
    D_{\mathrm{KL}}
    \left(
      P(\cdot\mid z_0,s)
      \,\|\,R(\cdot\mid z_0)
    \right)
  \right].
\end{equation}
The value term also decomposes conditionally:
\begin{equation}
  \mathbb E_{\tau\sim P}
  \left[
    Q(s,T(z_K))
  \right]
  =
  \mathbb E_{z_0\sim p_0}
  \mathbb E_{\tau_{1:K}\sim P(\cdot\mid z_0,s)}
  \left[
    Q(s,T(z_K))
  \right].
\end{equation}
Hence the fixed-base objective is an average, over $z_0\sim p_0$, of independent conditional variational problems minus the initial KL constant. For each fixed $z_0$, define
\begin{equation}
  f_{s,z_0}(\tau_{1:K})
  =
  Q(s,T(z_K)).
\end{equation}
Applying the Gibbs variational identity to the conditional reference $R(\cdot\mid z_0)$ gives
\begin{align}
  \Psi_{z_0}(P)
  &:=
  \mathbb E_{\tau_{1:K}\sim P(\cdot\mid z_0,s)}
  \left[
    f_{s,z_0}(\tau_{1:K})
  \right]
  -
  \alpha
  D_{\mathrm{KL}}
  \left(
    P(\cdot\mid z_0,s)
    \,\|\,R(\cdot\mid z_0)
  \right)
  \nonumber\\
  &=
  \alpha\log Z_Q(s,z_0)
  -
  \alpha
  D_{\mathrm{KL}}
  \left(
    P(\cdot\mid z_0,s)
    \,\|\,P^\star(\cdot\mid z_0,s)
  \right),
\end{align}
where the unique conditional optimizer satisfies
\begin{equation}
  \frac{
    dP^\star(\cdot\mid z_0,s)
  }{
    dR(\cdot\mid z_0)
  }(\tau_{1:K})
  =
  \frac{
    \exp(Q(s,T(z_K))/\alpha)
  }{
    Z_Q(s,z_0)
  },
\end{equation}
and where $Z_Q(s,z_0)$ is the conditional partition function in Theorem~\ref{thm:fixed_base_optimum}. The conditional KL term is nonnegative and vanishes only at this optimizer, so the optimizer is unique for $p_0$-almost every $z_0$. Averaging the conditional value $\alpha\log Z_Q(s,z_0)$ under $p_0$ and subtracting $\alpha D_{\mathrm{KL}}(p_0\,\|\,r_0)$ gives Eq.~\eqref{eq:fixed_base_objective}. Substituting the conditional optimizer into the fixed-base disintegration gives Eq.~\eqref{eq:shared_base_optimum}.
\end{proof}

\subsection{Proof of Corollary~1}
\label{app:proof_base_gap}

\begin{restatement}
Under the assumptions of Theorem~\ref{thm:fixed_base_optimum}, the loss relative to the unrestricted optimum is
\[
  \Delta(s)
  =
  \alpha
  \left(
    \log \bar Z_Q(s)
    -
    \mathbb E_{z_0\sim p_0}
    \left[
      \log Z_Q(s,z_0)
    \right]
    +
    D_{\mathrm{KL}}
    \left(
      p_0
      \,\|\,
      r_0
    \right)
  \right)
  \ge 0.
\]
Equality holds if and only if $p_0$ equals the initial marginal of the unrestricted tilted bridge. If $p_0=r_0$, this is equivalent to $Z_Q(s,z_0)$ being constant for $p_0$-almost every $z_0$. Moreover,
\[
  D_{\mathrm{KL}}
  \left(
    P_{Q,p_0}^\star(\cdot\mid s)
    \,\|\,P_Q^\star(\cdot\mid s)
  \right)
  =
  \frac{
    \Delta(s)
  }{
    \alpha
  },
\]
and
\[
  D_{\mathrm{KL}}
  \left(
    \pi_{Q,p_0}^\star(\cdot\mid s)
    \,\|\,\pi_Q^\star(\cdot\mid s)
  \right)
  \le
  \frac{
    \Delta(s)
  }{
    \alpha
  }.
\]

\end{restatement}

\begin{proof}
The unrestricted partition function can be decomposed by the reference base law:
\begin{align}
  Z_Q(s)
  &=
  \mathbb E_{\tau\sim R}
  \left[
    \exp(Q(s,T(z_K))/\alpha)
  \right]
  \nonumber\\
  &=
  \mathbb E_{z_0\sim r_0}
  \left[
    Z_Q(s,z_0)
  \right]
  \nonumber\\
  &=
  \bar Z_Q(s).
\end{align}
Therefore Theorem~1 gives unrestricted value $\alpha\log \bar Z_Q(s)$, while Theorem~\ref{thm:fixed_base_optimum} gives fixed-base value
\begin{equation}
  \alpha
  \mathbb E_{z_0\sim p_0}
  \left[
    \log Z_Q(s,z_0)
  \right]
  -
  \alpha
  D_{\mathrm{KL}}
  \left(
    p_0
    \,\|\,
    r_0
  \right).
\end{equation}
Their difference is Eq.~\eqref{eq:base_jensen_gap}. This difference is nonnegative because it is the initial-law KL to the unrestricted tilted bridge. Indeed, the unrestricted tilted bridge has initial marginal
\begin{equation}
  P_{Q,0}^\star(dz_0\mid s)
  =
  \frac{
    Z_Q(s,z_0)
  }{
    \bar Z_Q(s)
  }
  r_0(dz_0).
\end{equation}
Thus
\begin{align}
  D_{\mathrm{KL}}
  \left(
    p_0
    \,\|\,
    P_{Q,0}^\star(\cdot\mid s)
  \right)
  &=
  \mathbb E_{z_0\sim p_0}
  \left[
    \log
    \frac{
      dp_0
    }{
      dr_0
    }(z_0)
    +
    \log \bar Z_Q(s)
    -
    \log Z_Q(s,z_0)
  \right]
  \nonumber\\
  &=
  \frac{
    \Delta(s)
  }{
    \alpha
  }.
\end{align}
This proves nonnegativity and the equality condition. If $p_0=r_0$, the condition reduces to $Z_Q(s,z_0)$ being constant under $p_0$.

It remains to relate this gap to the path and endpoint laws. The unrestricted tilted bridge satisfies
\begin{equation}
  \frac{
    dP_Q^\star
  }{
    dR
  }(\tau\mid s)
  =
  \frac{
    \exp(Q(s,T(z_K))/\alpha)
  }{
    \bar Z_Q(s)
  },
\end{equation}
while the fixed-base optimum satisfies
\begin{equation}
  \frac{
    dP_{Q,p_0}^\star
  }{
    dR
  }(\tau\mid s)
  =
  \frac{
    dp_0
  }{
    dr_0
  }(z_0)
  \frac{
    \exp(Q(s,T(z_K))/\alpha)
  }{
    Z_Q(s,z_0)
  }.
\end{equation}
Therefore
\begin{align}
  D_{\mathrm{KL}}
  \left(
    P_{Q,p_0}^\star
    \,\|\,P_Q^\star
  \right)
  &=
  \mathbb E_{z_0\sim p_0}
  \left[
    \log
    \frac{
      dp_0
    }{
      dr_0
    }(z_0)
    +
    \log
    \frac{
      \bar Z_Q(s)
    }{
      Z_Q(s,z_0)
    }
  \right]
  \nonumber\\
  &=
  \frac{
    \Delta(s)
  }{
    \alpha
  }.
\end{align}
The expectation above is under $P_{Q,p_0}^\star$. Its initial marginal is still $p_0$ by construction, which is why the expectation reduces to $z_0\sim p_0$. Finally, the endpoint action is the measurable map $\tau\mapsto T(z_K)$. Data processing for relative entropy under this map gives
\begin{equation}
  D_{\mathrm{KL}}
  \left(
    \pi_{Q,p_0}^\star(\cdot\mid s)
    \,\|\,\pi_Q^\star(\cdot\mid s)
  \right)
  \le
  D_{\mathrm{KL}}
  \left(
    P_{Q,p_0}^\star(\cdot\mid s)
    \,\|\,P_Q^\star(\cdot\mid s)
  \right),
\end{equation}
which proves Eq.~\eqref{eq:shared_base_endpoint_bound}.
\end{proof}

\subsection{Proof of Lemma~\ref{lem:control_energy}}
\label{app:proof_control_energy}

\begin{restatement}
Let
\[
  P_\theta(d\tau\mid s)
  =
  p_0(dz_0)
  \prod_{k=0}^{K-1}
  q_{\theta,k}(dz_{k+1}\mid z_k,s),
  \qquad
  R(d\tau)
  =
  r_0(dz_0)
  \prod_{k=0}^{K-1}
  r_k(dz_{k+1}\mid z_k).
\]
Then
\[
  D_{\mathrm{KL}}
  \left(
    P_\theta(\cdot\mid s)
    \,\|\,R
  \right)
  =
  D_{\mathrm{KL}}
  \left(
    p_0
    \,\|\,
    r_0
  \right)
  +
  \sum_{k=0}^{K-1}
  \mathbb E_{\tau\sim P_\theta}
  \left[
    D_{\mathrm{KL}}
    \left(
      q_{\theta,k}(\cdot\mid z_k,s)
      \,\|\,r_k(\cdot\mid z_k)
    \right)
  \right].
\]
For Gaussian local kernels, if $\mathcal C_\theta(s,\tau)$ is the sum of the local Gaussian KL terms, then
\[
  D_{\mathrm{KL}}
  \left(
    P_\theta(\cdot\mid s)
    \,\|\,R
  \right)
  =
  D_{\mathrm{KL}}
  \left(
    p_0
    \,\|\,
    r_0
  \right)
  +
  \mathbb E_{\tau\sim P_\theta(\cdot\mid s)}
  \left[
    \mathcal C_\theta(s,\tau)
  \right].
\]

\end{restatement}

\begin{proof}
The path likelihood ratio separates into an initial-base term and local transition terms:
\begin{equation}
  \log
  \frac{
    dP_\theta(\tau\mid s)
  }{
    dR(\tau)
  }
  =
  \log
  \frac{
    dp_0
  }{
    dr_0
  }(z_0)
  +
  \sum_{k=0}^{K-1}
  \log
  \frac{
    q_{\theta,k}(z_{k+1}\mid z_k,s)
  }{
    r_k(z_{k+1}\mid z_k)
  }.
\end{equation}
Taking expectation under the actor path law gives the initial contribution
\begin{equation}
  \mathbb E_{z_0\sim p_0}
  \left[
    \log
    \frac{
      dp_0
    }{
      dr_0
    }(z_0)
  \right]
  =
  D_{\mathrm{KL}}
  \left(
    p_0
    \,\|\,
    r_0
  \right).
\end{equation}
\begin{align}
  \mathbb E_{\tau\sim P_\theta(\cdot\mid s)}
  \left[
    \log
    \frac{
      q_{\theta,k}(z_{k+1}\mid z_k,s)
    }{
      r_k(z_{k+1}\mid z_k)
    }
  \right]
  &=
  \mathbb E_{z_k\sim P_{\theta,k}(\cdot\mid s)}
  \mathbb E_{z_{k+1}\sim q_{\theta,k}(\cdot\mid z_k,s)}
  \left[
    \log
    \frac{
      q_{\theta,k}(z_{k+1}\mid z_k,s)
    }{
      r_k(z_{k+1}\mid z_k)
    }
  \right]
  \nonumber\\
  &=
  \mathbb E_{z_k\sim P_{\theta,k}(\cdot\mid s)}
  \left[
    D_{\mathrm{KL}}
    \left(
      q_{\theta,k}(\cdot\mid z_k,s)
      \,\|\,r_k(\cdot\mid z_k)
    \right)
  \right],
\end{align}
where $P_{\theta,k}(\cdot\mid s)$ is the actor marginal law of $z_k$. Summing this identity over $k$ gives Eq.~\eqref{eq:local_kl_sum}. If the local kernels are Gaussian, each local KL has a closed form. Defining $\mathcal C_\theta(s,\tau)$ as the sum of those local Gaussian KLs gives Eq.~\eqref{eq:path_kl_control_energy}.
\end{proof}

\subsection{Proof of Proposition~\ref{prop:actor_reverse_kl}}
\label{app:proof_actor_projection}

\begin{restatement}
Let $P_Q^\star$ be the unrestricted soft-optimal bridge in Theorem~1. For any finite-step bridge actor with fixed base law $p_0\ll r_0$ satisfying Lemma~\ref{lem:control_energy},
\[
  \mathcal L_{\mathrm{actor}}(\theta\mid s)
  =
  \mathbb E_{\tau\sim P_\theta(\cdot\mid s)}
  \left[
    \alpha\,\mathcal C_\theta(s,\tau)
    -
    Q(s,T(z_K))
  \right]
\]
satisfies
\[
  \mathcal L_{\mathrm{actor}}(\theta\mid s)
  =
  \alpha
  D_{\mathrm{KL}}
  \left(
    P_\theta(\cdot\mid s)
    \,\|\,P_Q^\star(\cdot\mid s)
  \right)
  -
  \alpha\log Z_Q(s)
  -
  \alpha
  D_{\mathrm{KL}}
  \left(
    p_0
    \,\|\,
    r_0
  \right).
\]
Consequently, minimizing this loss over the unrestricted fixed-base path-law family has global minimizer $P_{Q,p_0}^\star$.

\end{restatement}

\begin{proof}
Fix a state $s$. By Lemma~\ref{lem:control_energy}, the sampled control energy satisfies
\begin{equation}
  \mathbb E_{\tau\sim P_\theta(\cdot\mid s)}
  \left[
    \mathcal C_\theta(s,\tau)
  \right]
  =
  D_{\mathrm{KL}}
  \left(
    P_\theta(\cdot\mid s)
    \,\|\,R
  \right)
  -
  D_{\mathrm{KL}}
  \left(
    p_0
    \,\|\,
    r_0
  \right).
\end{equation}
The unrestricted soft-optimal bridge from Theorem~1 has likelihood ratio
\begin{equation}
  \log
  \frac{
    dP_Q^\star
  }{
    dR
  }(\tau\mid s)
  =
  \frac{
    Q(s,T(z_K))
  }{
    \alpha
  }
  -
  \log Z_Q(s).
\end{equation}
Expanding the reverse KL to this ideal bridge gives
\begin{align}
  &\alpha
  D_{\mathrm{KL}}
  \left(
    P_\theta(\cdot\mid s)
    \,\|\,P_Q^\star(\cdot\mid s)
  \right)
  \nonumber\\
  &=
  \alpha
  \mathbb E_{\tau\sim P_\theta(\cdot\mid s)}
  \left[
    \log
    \frac{
      dP_\theta
    }{
      dR
    }
    (\tau\mid s)
    -
    \frac{
      Q(s,T(z_K))
    }{
      \alpha
    }
    +
    \log Z_Q(s)
  \right]
  \nonumber\\
  &=
  \alpha
  D_{\mathrm{KL}}
  \left(
    P_\theta(\cdot\mid s)
    \,\|\,R
  \right)
  -
  \mathbb E_{\tau\sim P_\theta}
  \left[
    Q(s,T(z_K))
  \right]
  +
  \alpha\log Z_Q(s)
  \nonumber\\
  &=
  \mathcal L_{\mathrm{actor}}(\theta\mid s)
  +
  \alpha\log Z_Q(s)
  +
  \alpha
  D_{\mathrm{KL}}
  \left(
    p_0
    \,\|\,
    r_0
  \right).
\end{align}
Rearranging gives Eq.~\eqref{eq:actor_reverse_kl}. Since the projection is optimized over the unrestricted fixed-base family, Theorem~\ref{thm:fixed_base_optimum} identifies the global minimizer as $P_{Q,p_0}^\star$. The finite neural Gaussian Markov actor used by the algorithm is a restricted parameterization of this family, so the proof gives the population projection target rather than a global optimization guarantee for SGD.
\end{proof}

\section{Algorithm Pseudocode}
\label{app:pseudocode}

\begin{algorithm}[H]
\caption{\myalgo off-policy training}
\label{alg:softgac}
\begin{algorithmic}[1]
\Require replay buffer $\mathcal D$, actor $P_\theta$, target actor $P_{\bar\theta}$, critic $Q_\phi$, target critic $Q_{\bar\phi}$, temperature $\alpha$, target cost $\mathcal C_{\mathrm{target}}$, discount $\gamma$, Polyak coefficient $\rho$
\State initialize $\mathcal D$, $\theta$, $\phi$, $\bar\theta\leftarrow\theta$, $\bar\phi\leftarrow\phi$
\For{each environment step}
  \State sample a bridge path $\tau\sim P_\theta(\cdot\mid s)$ and execute $a=T(z_K)$
  \State store $(s,a,r,s',d)$ in $\mathcal D$
  \For{each gradient update}
    \State sample a replay batch $(s,a,r,s',d)\sim\mathcal D$
    \State sample next paths $\tau'\sim P_{\bar\theta}(\cdot\mid s')$
    \State compute next actions $a'=T(z'_K)$ and bridge costs $\mathcal C_{\bar\theta}(s',\tau')$
    \State build soft targets $y=r+\gamma(1-d)\left[Q_{\bar\phi}^{\min}(s',a')-\alpha\mathcal C_{\bar\theta}(s',\tau')\right]$
    \State update $Q_\phi$ toward $y$ with the chosen off-policy critic loss
    \If{the policy-delay interval is reached}
      \State sample current paths $\tau\sim P_\theta(\cdot\mid s)$
      \State compute current actions $\tilde a=T(z_K)$ and bridge costs $\mathcal C_\theta(s,\tau)$
      \State update $\theta$ to minimize $\mathbb E[\alpha\mathcal C_\theta(s,\tau)-Q_\phi^{\min}(s,\tilde a)]$
      \State update $\log\alpha$ to minimize $\mathbb E[\alpha(\mathcal C_{\mathrm{target}}-\mathcal C_\theta(s,\tau))]$ with $\alpha=\exp(\log\alpha)$
    \EndIf
    \If{using target networks and the target-update interval is reached}
      \State soft-update target networks $\bar\theta\leftarrow\rho\bar\theta+(1-\rho)\theta$ and $\bar\phi\leftarrow\rho\bar\phi+(1-\rho)\phi$
    \EndIf
  \EndFor
\EndFor
\end{algorithmic}
\end{algorithm}

\section{2D Bridge Visualization}
\label{app:controlled_2d}

Figure~\ref{fig:toy_bridge_2d} visualizes the bridge mechanism in a controlled two-dimensional fixed-critic setting. This diagnostic setting isolates the actor update by training a bridge policy against a hand-designed multimodal value function. It makes the finite-step path distribution visible as it moves from the base law to the terminal action law under different control-energy budgets.

\begin{figure}[t]
  \centering
  \includegraphics[width=\linewidth]{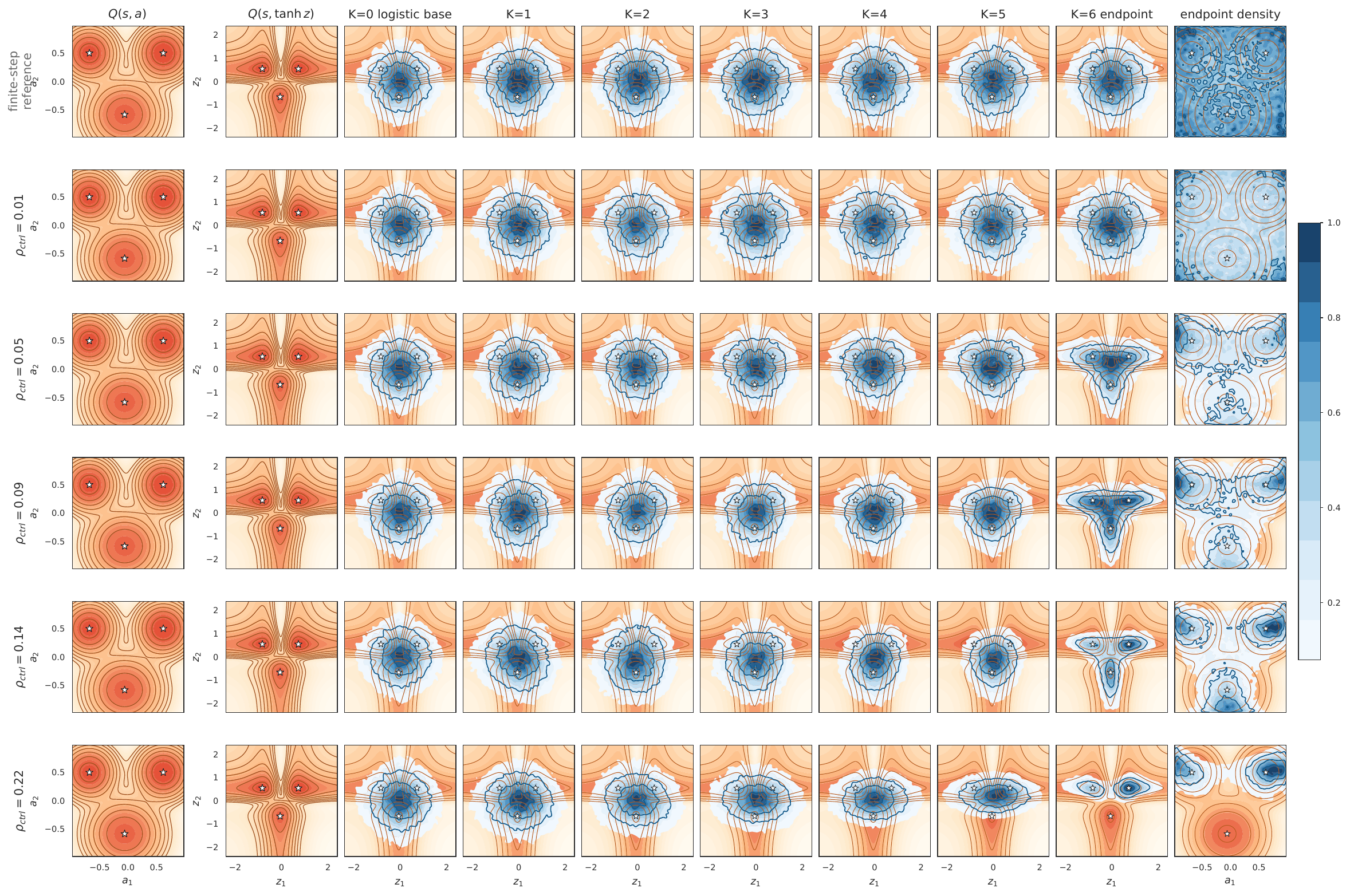}
  \caption{2D bridge visualization. Each row corresponds to a different control-energy budget. Columns show the action-space critic, the corresponding pre-tanh latent critic, intermediate latent bridge densities and the final action-space policy density.}
  \label{fig:toy_bridge_2d}
\end{figure}

The bounded action is $a=(a_1,a_2)\in(-1,1)^2$. We define an unnormalized fixed critic as a log-sum-exp mixture of three Gaussian-like modes,
\begin{equation}
  Q_{\mathrm{raw}}(a)
  =
  \log
  \sum_{m=1}^{3}
  w_m
  \exp
  \left(
    -
    \frac{1}{2}
    \left\|
      \frac{a-c_m}{\sigma_m}
    \right\|_2^2
  \right),
\end{equation}
with centers
\begin{equation}
  c_1=(-0.68,0.50),
  \qquad
  c_2=(0.62,0.50),
  \qquad
  c_3=(-0.06,-0.58),
\end{equation}
axis-wise widths
\begin{equation}
  \sigma_1=(0.24,0.24),
  \qquad
  \sigma_2=(0.24,0.24),
  \qquad
  \sigma_3=(0.34,0.32),
\end{equation}
and weights $(w_1,w_2,w_3)=(0.90,1.00,0.70)$. We normalize the critic to $[0,1]$ over a dense action grid and use this normalized value as $Q(s,a)$ for a single dummy state $s$. The latent landscape shown in the second column is the pullback $Q(s,\tanh z)$.

The reference process is the same finite-step reference used in the method section. We sample the base action from an approximately uniform law on $[-0.995,0.995]^2$ and set $z_0=\mathrm{arctanh}(a_0)$. The reference latent process uses $K=6$ Euler steps with $h=1/K$,
\begin{equation}
  z_{k+1}
  =
  z_k
  -
  2h\tanh(z_k)
  +
  \sqrt{2h}\,\epsilon_k,
  \qquad
  \epsilon_k\sim\mathcal N(0,I).
\end{equation}
This finite chain approximates the continuous reference whose stationary action law after $\tanh$ is uniform. We intentionally show the finite-step reference rather than the continuous-limit uniform distribution because the algorithm optimizes the finite-step path KL.

For each control-energy budget, we train a separate bridge actor with $K=6$ action-dimensional latent transitions. Given a sampled logistic reference base latent and Gaussian transition noise, the actor produces a path $\tau=(z_0,\ldots,z_K)$ and terminal action $a=\tanh(z_K)$. We optimize the sampled actor objective
\begin{equation}
  \mathbb E_{\tau\sim P_\theta}
  \left[
    \alpha\,\mathcal C_\theta(s,\tau)
    -
    Q(s,\tanh z_K)
  \right],
\end{equation}
where $\mathcal C_\theta$ is the finite-step local Gaussian control energy from Lemma~\ref{lem:control_energy}. The temperature is tuned with the same budget form as the main algorithm,
\begin{equation}
  \mathcal C_{\mathrm{target}}
  =
  \rho_{\mathrm{ctrl}} K d_a,
  \qquad d_a=2.
\end{equation}
The rows in Figure~\ref{fig:toy_bridge_2d} use
\begin{equation}
  \rho_{\mathrm{ctrl}}\in\{0.01,0.05,0.09,0.14,0.22\}.
\end{equation}
Small budgets keep the actor close to the reference bridge and preserve broad coverage. Larger budgets allow stronger value guidance and concentrate the endpoint density near higher-value modes. The first two columns show $Q(s,a)$ in action space and $Q(s,\tanh z)$ in pre-tanh latent space. The next columns show latent densities at bridge steps $K=0,\ldots,6$. The final column maps terminal latents back to action space and overlays the resulting policy density on the action-space critic.

\section{Finite-Step Reference Endpoint Bias}
\label{app:finite_reference_bias}

In this paper, we use a finite-step reference bridge to approximate the ideal continuous reference whose endpoint action law is uniform. This approximation affects only the endpoint marginal law used as the reference prior. It is distinct from the path-wise KL used by the actor update, which remains exact for the chosen finite-step actor-reference pair. We quantify this endpoint effect in Figure~\ref{fig:reference_endpoint_bias}.

\begin{figure}[t]
  \centering
  \begin{minipage}[t]{0.48\linewidth}
    \centering
    \includegraphics[width=\linewidth]{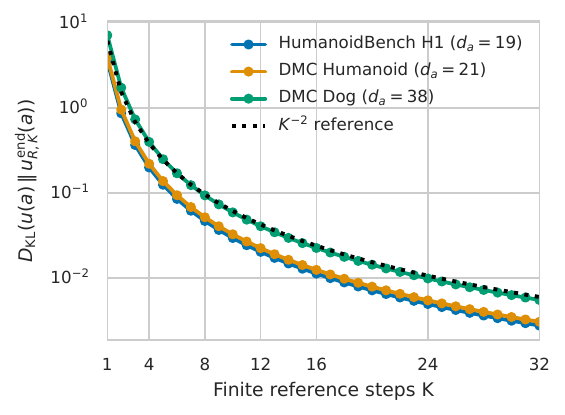}
  \end{minipage}
  \hfill
  \begin{minipage}[t]{0.48\linewidth}
    \centering
    \includegraphics[width=\linewidth]{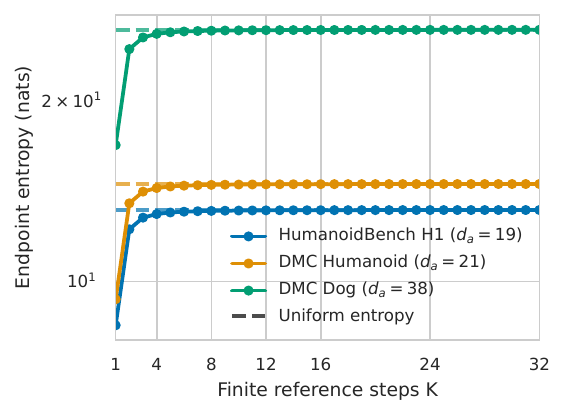}
  \end{minipage}
  \caption{Finite-step reference endpoint bias. Left shows the endpoint action-marginal KL from the ideal uniform prior to the finite-step reference endpoint law. Right shows the finite-step endpoint entropy and the corresponding uniform entropy baselines. The three curves use the action dimensions of HumanoidBench H1, DMC Humanoid and DMC Dog.}
  \label{fig:reference_endpoint_bias}
\end{figure}

Let $q_{\mathrm{ref}}(z)=\frac{1}{2}\mathrm{sech}^2(z)$ be the one-dimensional pre-tanh density whose image under $\tanh$ is uniform. For a reference bridge with $K$ Euler steps, let $p_K$ be the one-dimensional terminal latent density produced by
\begin{equation}
  z_{k+1}
  =
  z_k
  -
  2h\tanh(z_k)
  +
  \sqrt{2h}\,\epsilon_k,
  \qquad h=\frac{1}{K}.
\end{equation}
The map $\tanh$ is bijective between latent space and the normalized action interval, so endpoint KL values can be computed in latent space without estimating an action-space density. For example,
\begin{equation}
  D_{\mathrm{KL}}\!\left(u(a)\,\middle\|\,u^{\mathrm{end}}_{R,K}(a)\right)
  =
  D_{\mathrm{KL}}\!\left(q_{\mathrm{ref}}(z)\,\middle\|\,p_K(z)\right).
\end{equation}
The implemented reference factorizes across action dimensions, so the $d_a$-dimensional laws are product measures and relative entropy is additive across coordinates. It is therefore enough to compute the one-dimensional terminal density and multiply the resulting KL by $d_a$. Let $\mathcal T_h$ be the one-dimensional Euler transition operator,
\begin{equation}
  (\mathcal T_h f)(z')
  =
  \int
  \mathcal N\!\left(
    z'
    \,\middle|\,
    z-2h\tanh z,
    2h
  \right)
  f(z)\,dz,
  \qquad h=\frac{1}{K}.
\end{equation}
Then $p_K=\mathcal T_h^K q_{\mathrm{ref}}$, and the $d_a$-dimensional endpoint-prior gap has the exact finite-$K$ form
\begin{equation}
  G_K(d_a)
  =
  D_{\mathrm{KL}}\!\left(u(a)\,\middle\|\,u^{\mathrm{end}}_{R,K}(a)\right)
  =
  d_a
  \int
  q_{\mathrm{ref}}(z)
  \log
  \frac{q_{\mathrm{ref}}(z)}{p_K(z)}
  dz.
  \label{eq:finite_reference_endpoint_gap}
\end{equation}
Thus the dependence on $d_a$ is exactly linear. The dependence on $K$ comes only from the Euler discretization error in $p_K$. Under the standard first-order density expansion for Euler discretization over a fixed time horizon, $p_K(z)=q_{\mathrm{ref}}(z)(1+K^{-1}r(z)+O(K^{-2}))$ with $\int q_{\mathrm{ref}}(z)r(z)\,dz=0$. The first-order term cancels in the KL because of normalization, so the leading contribution is quadratic in the density error. Substituting this expansion into Eq.~\eqref{eq:finite_reference_endpoint_gap} gives
\begin{equation}
  G_K(d_a)
  =
  \frac{d_a}{2K^2}
  \int
  q_{\mathrm{ref}}(z)r(z)^2\,dz
  +
  O\!\left(\frac{d_a}{K^3}\right).
  \label{eq:finite_reference_gap_rate}
\end{equation}
Thus the leading endpoint-prior gap scales as $O(d_a/K^2)$ whenever this expansion holds. This rate explains the observed finite-step bias but is not an assumption used by the algorithm. The entropy deficit has the same leading rate because it equals $D_{\mathrm{KL}}(u^{\mathrm{end}}_{R,K}\|u)$.
Likewise, the endpoint entropy satisfies
\begin{equation}
  H\!\left(u^{\mathrm{end}}_{R,K}\right)
  =
  d_a\log 2
  -
  D_{\mathrm{KL}}\!\left(u^{\mathrm{end}}_{R,K}\,\middle\|\,u\right),
\end{equation}
where $d_a\log 2$ is the $d_a$-dimensional uniform endpoint entropy. We compute the one-dimensional density deterministically on a dense grid by applying the Gaussian transition kernel, then report the corresponding $d_a$-dimensional quantities for the action dimensions used in our experiments.

The bias decreases quickly with $K$. At the default $K=6$, the endpoint KL $D_{\mathrm{KL}}(u\,\|\,u^{\mathrm{end}}_{R,K})$ is approximately $0.085$, $0.094$ and $0.169$ nats for $d_a=19$, $21$ and $38$. The endpoint entropy is already close to the uniform entropy in all three domains, while the network is still shallow to optimize. This supports the interpretation that the finite reference mainly provides an implementable high-entropy prior, while the algorithm optimizes the exact finite-step path regularizer defined by that prior.

\clearpage

\section{Detailed Experimental Setup}
\label{app:experimental_setup}

\subsection{Environment and Task Domains}
\label{app:env_task_domains}

Our benchmark contains $12$ continuous-control tasks from three domains. Figure~\ref{fig:benchmark_task_screenshots} shows representative rendered states. All experiments use vector observations and continuous bounded actions. The reported dimensions are the observation and action dimensions after the environment wrappers used by our training code.

\begin{figure}[H]
  \centering
  \includegraphics[width=\linewidth]{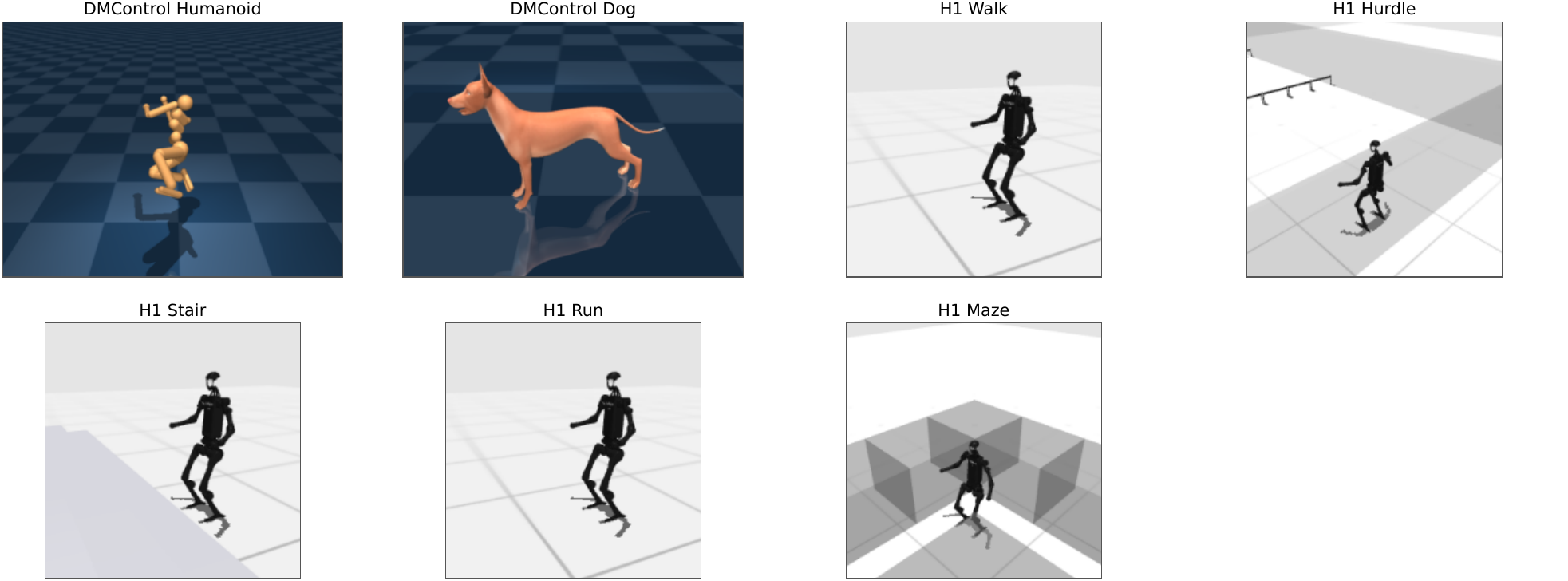}
  \caption{Representative screenshots of the benchmark domains and H1 tasks.}
  \label{fig:benchmark_task_screenshots}
\end{figure}

\begin{table}[H]
  \centering
  \footnotesize
  \setlength{\tabcolsep}{5.0pt}
  \caption{Benchmark task dimensions.}
  \label{tab:env_task_dims}
  \begin{tabular}{llcc}
    \toprule
    Domain & Task & Observation dim. & Action dim. \\
    \midrule
    DMC Humanoid & \texttt{dm\_control/humanoid-run} & 67 & 21 \\
    DMC Humanoid & \texttt{dm\_control/humanoid-walk} & 67 & 21 \\
    DMC Humanoid & \texttt{dm\_control/humanoid-stand} & 67 & 21 \\
    DMC Dog & \texttt{dm\_control/dog-run} & 223 & 38 \\
    DMC Dog & \texttt{dm\_control/dog-trot} & 223 & 38 \\
    DMC Dog & \texttt{dm\_control/dog-walk} & 223 & 38 \\
    DMC Dog & \texttt{dm\_control/dog-stand} & 223 & 38 \\
    HumanoidBench H1 & \texttt{h1-walk-v0} & 51 & 19 \\
    HumanoidBench H1 & \texttt{h1-hurdle-v0} & 51 & 19 \\
    HumanoidBench H1 & \texttt{h1-stair-v0} & 51 & 19 \\
    HumanoidBench H1 & \texttt{h1-run-v0} & 51 & 19 \\
    HumanoidBench H1 & \texttt{h1-maze-v0} & 51 & 19 \\
    \bottomrule
  \end{tabular}
\end{table}

DMC Humanoid evaluates high-dimensional biped locomotion with running, walking and standing objectives. DMC Dog evaluates quadruped control with a larger observation and action space, including fast running, trotting, walking and standing. HumanoidBench H1 uses a humanoid robot with $19$ actuated degrees of freedom. The walk and run tasks measure whole-body locomotion, while hurdle, stair and maze add obstacle negotiation, contact-rich terrain interaction and navigation constraints.

\paragraph{Benchmark assets and licenses.}
We use the DeepMind Control Suite through \texttt{dm\_control} version 1.0.39 from \url{https://github.com/google-deepmind/dm_control}, which is released under the Apache License 2.0. We use HumanoidBench through the official repository's main branch from \url{https://github.com/carlosferrazza/humanoid-bench}, which is released under the MIT License and includes third-party notices in its repository license file.

\paragraph{Baseline implementation references.}
All baseline results are produced by our unified JAX codebase. We use the official repositories in Table~\ref{tab:baseline_code_references} as implementation references, then re-implement the algorithm-specific actor, target construction and auxiliary losses in JAX inside our training stack. This gives each method the same logging, replay, evaluation and experiment infrastructure while preserving the algorithm-specific components required by the original implementations.

\begin{table}[H]
  \centering
  \footnotesize
  \setlength{\tabcolsep}{5pt}
  \caption{Official implementation repositories used as baseline references.}
  \label{tab:baseline_code_references}
  \begin{tabular}{ll}
    \toprule
    Method & Official repository \\
    \midrule
    FLAC & \url{https://github.com/bytedance/FLAC} \\
    DIME & \url{https://github.com/ALRhub/DIME} \\
    FlowRL & \url{https://github.com/bytedance/FlowRL} \\
    QSM & \url{https://github.com/escontra/score_matching_rl} \\
    QVPO & \url{https://github.com/wadx2019/qvpo} \\
    CrossQ-SAC & \url{https://github.com/adityab/CrossQ} \\
    \bottomrule
  \end{tabular}
\end{table}

\subsection{Training Hardware}
\label{app:training_hardware}

Main training runs were executed on Google Cloud TPU v4-16/v6e-8 VMs. On the v6e hosts used for our experiments, Linux reports two AMD EPYC 9B14 sockets with $90$ cores per socket, $180$ logical CPUs, one thread per core and approximately $1.4$ TiB of system memory. The machines run Ubuntu Linux with kernel 6.8 on x86\_64 CPUs. We use the same hardware class for the reported TPU experiments unless otherwise noted, and Appendix~\ref{app:more_results} reports representative wall-clock curves for difficult tasks. This hardware was used to run many tasks and seeds in parallel, not because the model itself requires unusually large compute. The actor and critic networks are lightweight by modern deep RL standards. Under the reported hyperparameters, a consumer GPU such as an RTX 2080 Ti-class device with $32$ GB of host memory is sufficient to train these agents on the benchmark tasks, although wall-clock time will be longer than on our TPU cluster.

\subsection{Implementation Details}
\label{app:implementation_details}

This subsection describes the concrete network and update implementation used for \myalgo. The actor operates in pre-tanh latent space and samples a tensor of noises with shape $B\times(K+1)\times d_a$. The first slice gives the base latent $z_0$. In the main runs, we use the logistic base by sampling $a_0\sim\mathrm{Unif}((-1,1)^{d_a})$ and setting $z_0=\mathrm{arctanh}(a_0)$. The actor forward pass is:

\begin{verbatim}
def bridge_actor(obs, noise):
    z = noise[:, 0]                         # base latent z_0
    latents, drifts, sigmas = [z], [], []
    h = 1.0 / K
    for k in range(K):
        x = concat(obs, z)
        x = LayerNorm(x)
        x = Dense(hidden_size)(x)
        x = elu(x)
        x = LayerNorm(x)
        drift = Dense(action_dim)(x)
        sigma = softplus(Dense(action_dim)(x))
        z = z + h * drift + sqrt(2 * h) * sigma * noise[:, k + 1]
        latents.append(z)
        drifts.append(drift)
        sigmas.append(sigma)
    action = action_scale * tanh(z) + action_bias
    return action, latents, drifts, sigmas
\end{verbatim}

Each block uses one hidden layer with layer normalization before and after the ELU activation, followed by drift and positive diagonal-scale heads. Main runs use $K=6$ blocks and width $512$, with width $256$ for DMC Dog to keep the actor parameter budget close to the baselines. The actor update minimizes $\alpha\mathcal C_\theta(s,\tau)-Q_\phi^{\min}(s,a)$. The control energy $\mathcal C_\theta$ is the accumulated closed-form Gaussian transition KL to the reference bridge, so no endpoint entropy estimator is used. The temperature $\alpha=\exp(\log\alpha)$ is tuned toward $\rho_{\mathrm{ctrl}}Kd_a$, with default $\rho_{\mathrm{ctrl}}=0.2$. The main critic is the same twin C51 critic with CrossQ-style update adopted in DIME~\citep{DIME25}.

\subsection{Choosing the Control-Energy Budget}
\label{app:rho_ctrl_choice}

The target control-energy budget is meant to set how strongly the value term can move the bridge away from the high-entropy reference, not to introduce a task-specific objective. A useful way to choose its scale is to ask how far one action dimension should be allowed to move under value-guided control. The path KL factorizes over $K$ transitions and $d_a$ action dimensions, so we set
\begin{equation}
  \mathcal C_{\mathrm{target}}
  =
  \rho_{\mathrm{ctrl}} K d_a.
\end{equation}
This makes $\rho_{\mathrm{ctrl}}$ the average local KL budget, measured in nats, per transition and per action dimension. Equivalently, each action dimension receives a total path budget of $\rho_{\mathrm{ctrl}}K$ along the full bridge.

This scale has a simple endpoint interpretation. Ignore the learned variance term for a moment and assume that the actor and reference transitions share covariance $2hI$, where $h=1/K$. If the actor adds a one-dimensional mean control $\delta z_k$ at step $k$, the local KL contribution is
\begin{equation}
  D_{\mathrm{KL}}
  \left(
    \mathcal N(\mu_R+\delta z_k,2h)
    \,\|\,
    \mathcal N(\mu_R,2h)
  \right)
  =
  \frac{(\delta z_k)^2}{4h}.
\end{equation}
For a desired terminal displacement $\Delta z$ in one pre-tanh action dimension, the cheapest constant-speed control has $\delta z_k\approx \Delta z/K$. The accumulated control energy is then
\begin{equation}
  \sum_{k=0}^{K-1}
  \frac{(\Delta z/K)^2}{4h}
  =
  \frac{(\Delta z)^2}{4}.
\end{equation}
Thus a budget $\rho_{\mathrm{ctrl}}K$ per action dimension roughly permits a terminal pre-tanh displacement
\begin{equation}
  |\Delta z|
  \approx
  2\sqrt{\rho_{\mathrm{ctrl}}K}.
\end{equation}
If we want the actor to be able to push a dimension close to the saturated action region, say $|a|\approx 0.95$ to $0.98$, the corresponding pre-tanh scale is $z_{\mathrm{sat}}=\mathrm{arctanh}(|a|)\approx 1.8$ to $2.3$. Matching $\rho_{\mathrm{ctrl}}K\approx z_{\mathrm{sat}}^2/4$ gives $\rho_{\mathrm{ctrl}}\approx z_{\mathrm{sat}}^2/(4K)$. For the main bridge depth $K=6$, this range is about $0.14$ to $0.22$. We therefore use $\rho_{\mathrm{ctrl}}=0.2$ as a moderate default. It allows value-guided transport toward near-saturated high-value actions, but still assigns a visible cost to collapse away from the broad reference. Smaller budgets keep broader action coverage, while larger budgets permit sharper value-guided concentration. This is the same qualitative trend visualized in Appendix~\ref{app:controlled_2d}.

\subsection{Hyperparameters}
\label{app:hyperparameters}

Table~\ref{tab:actor_parameter_counts_main} reports the actor and main critic parameter counts by domain, and Table~\ref{tab:main_hyperparameters} summarizes the main hyperparameters. All methods use $8$ seeds and Adam optimizers. To reduce critic-side confounding, we use the same twin C51 main critic and CrossQ-style main critic update adopted in DIME~\cite{DIME25} across the compared methods. When an algorithm includes additional auxiliary networks, such as FlowRL's buffer critic, we keep the corresponding architecture and update rule from the official implementation.

\begin{table}[H]
  \centering
  \setlength{\tabcolsep}{4pt}
  \caption{Actor and main critic parameter counts in the experiment domains, in millions of parameters.}
  \label{tab:actor_parameter_counts_main}
  \begin{tabular}{lcccccc}
    \toprule
    & \multicolumn{2}{c}{DMC Humanoid} & \multicolumn{2}{c}{DMC Dog} & \multicolumn{2}{c}{HumanoidBench} \\
    \cmidrule(lr){2-3}\cmidrule(lr){4-5}\cmidrule(lr){6-7}
    Algorithm & Actor & Critic & Actor & Critic & Actor & Critic \\
    \midrule
    \myalgo & 0.410 & 9.19 & 0.526 & 9.90 & 0.342 & 9.11 \\
    FLAC & 0.322 & 9.19 & 0.419 & 9.90 & 0.311 & 9.11 \\
    DIME & 0.423 & 9.19 & 0.472 & 9.90 & 0.418 & 9.11 \\
    FlowRL & 0.322 & 9.19 & 0.419 & 9.90 & 0.311 & 9.11 \\
    QSM & 0.614 & 9.19 & 0.712 & 9.90 & 0.604 & 9.11 \\
    QVPO & 0.369 & 9.19 & 0.466 & 9.90 & 0.359 & 9.11 \\
    CrossQ-SAC & 0.321 & 9.19 & 0.419 & 9.90 & 0.311 & 9.11 \\
    \bottomrule
  \end{tabular}
\end{table}

\begin{table}[H]
  \centering
  \caption{Main hyperparameters used in the experiments.}
  \label{tab:main_hyperparameters}
  \setlength{\tabcolsep}{3pt}
  \renewcommand{\arraystretch}{1.12}
  \resizebox{\linewidth}{!}{%
  \begin{tabular}{lccccccc}
    \toprule
    Hyperparameter & \myalgo & FLAC & DIME & FlowRL & QSM & QVPO & CrossQ-SAC \\
    \midrule
    UTD ratio & 2 & 2 & 2 & 2 & 2 & 2 & 2 \\
    Discount & 0.99 & 0.99 & 0.99 & 0.99 & 0.99 & 0.99 & 0.99 \\
    Batch size & 256 & 256 & 256 & 256 & 256 & 256 & 256 \\
    Buffer size & $10^6$ & $10^6$ & $10^6$ & $10^6$ & $10^6$ & $10^6$ & $10^6$ \\
    Learn starts & 5000 & 5000 & 5000 & 5000 & 5000 & 5000 & 5000 \\
    Actor lr & \num{3e-4} & \num{3e-4} & \num{3e-4} & \num{3e-4} & \num{3e-4} & \num{3e-4} & \num{3e-4} \\
    Critic lr & \num{3e-4} & \num{3e-4} & \num{3e-4} & \num{3e-4} & \num{3e-4} & \num{3e-4} & \num{3e-4} \\
    Policy delay & 2 & 2 & 2 & 2 & 2 & 2 & 2 \\
    Regularizer & path KL & kinetic & entropy bd. & CFM & score-Q & weighted-Q & entropy \\
    Target entropy/energy & $0.2Kd_a$ & $0.5d_a$ & $4d_a$ & N/A & N/A & N/A & $-d_a$ \\
    Temp. lr & $10^{-3}$ & $3\cdot10^{-5}$ & $10^{-3}$ & N/A & N/A & N/A & $10^{-3}$ \\
    Critic depth & 2 & 2 & 2 & 2 & 2 & 2 & 2 \\
    Critic hidden size & 2048 & 2048 & 2048 & 2048 & 2048 & 2048 & 2048 \\
    Num. bins & 101 & 101 & 101 & 101 & 101 & 101 & 101 \\
    Actor depth & 6 & 2 & 3 & 2 & 2 & 2 & 2 \\
    Actor width & 256/512\textsuperscript{\dag} & 512 & 256 & 512 & 512 & 512 & 512 \\
    Multi-modal & yes & yes & yes & yes & yes & yes & no \\
    Prior dist. & Logistic & clip $\mathcal N(0,I)$ & $\mathcal N(0,2.5^2I)$ & clip $\mathcal N(0,I)$ & $\mathcal N(0,I)$ & $\mathcal N(0,I)$ & $\mathcal N(0,I)$ \\
    Iteration & 1 & 2 & 16 & 2 & 5 & 16 & 1 \\ 
    \bottomrule
  \end{tabular}
  }
  \vspace{2pt}
  \begin{flushleft}
    \footnotesize \textsuperscript{\dag} \myalgo uses width 256 only for DMC Dog and width 512 otherwise for controlling the parameter count.
  \end{flushleft}
\end{table}

\section{Extended Experimental Results and Ablations}
\label{app:more_results}

\begingroup
\setlength{\textfloatsep}{6pt plus 1pt minus 1pt}
\setlength{\floatsep}{6pt plus 1pt minus 1pt}
\setlength{\intextsep}{6pt plus 1pt minus 1pt}
\setlength{\abovecaptionskip}{3pt}
\setlength{\belowcaptionskip}{0pt}

We provide more detailed results in this section. Figures~\ref{fig:appendix_iqm_learning_curves} and~\ref{fig:appendix_final_iqm_by_task} report the full $12$-task learning curves and final IQM returns, which evaluate the sample-efficiency of each algorithm. Figure~\ref{fig:appendix_compute_return_tradeoff} shows the per-task compute-return tradeoff using actor inference time, which gives a more direct efficiency comparison than training wall-clock time. Figure~\ref{fig:appendix_wallclock_iqm_curves} gives supplementary wall-clock views on Humanoid Run and Dog Run using median seed time at each evaluation step. Figures~\ref{fig:appendix_actor_width_sensitivity}, \ref{fig:appendix_depth_sensitivity} and~\ref{fig:appendix_rho_sensitivity} report actor-width, bridge-depth and control-budget sensitivity.

\begin{figure}[!htbp]
  \centering
  \includegraphics[width=0.98\linewidth]{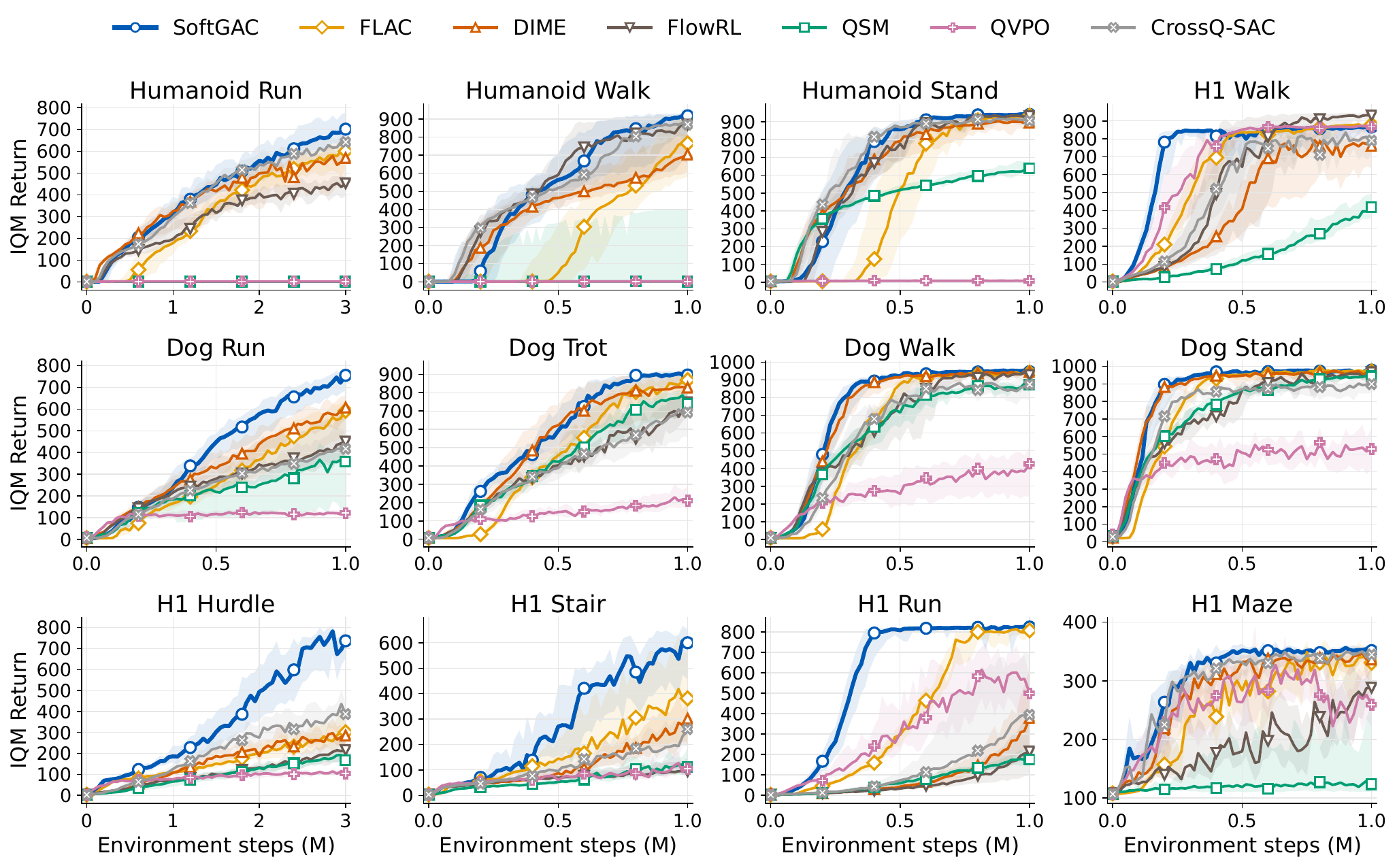}
  \caption{Full IQM learning curves on all benchmark tasks.}
  \label{fig:appendix_iqm_learning_curves}
\end{figure}

\begin{figure}[!htbp]
  \centering
  \includegraphics[width=0.98\linewidth]{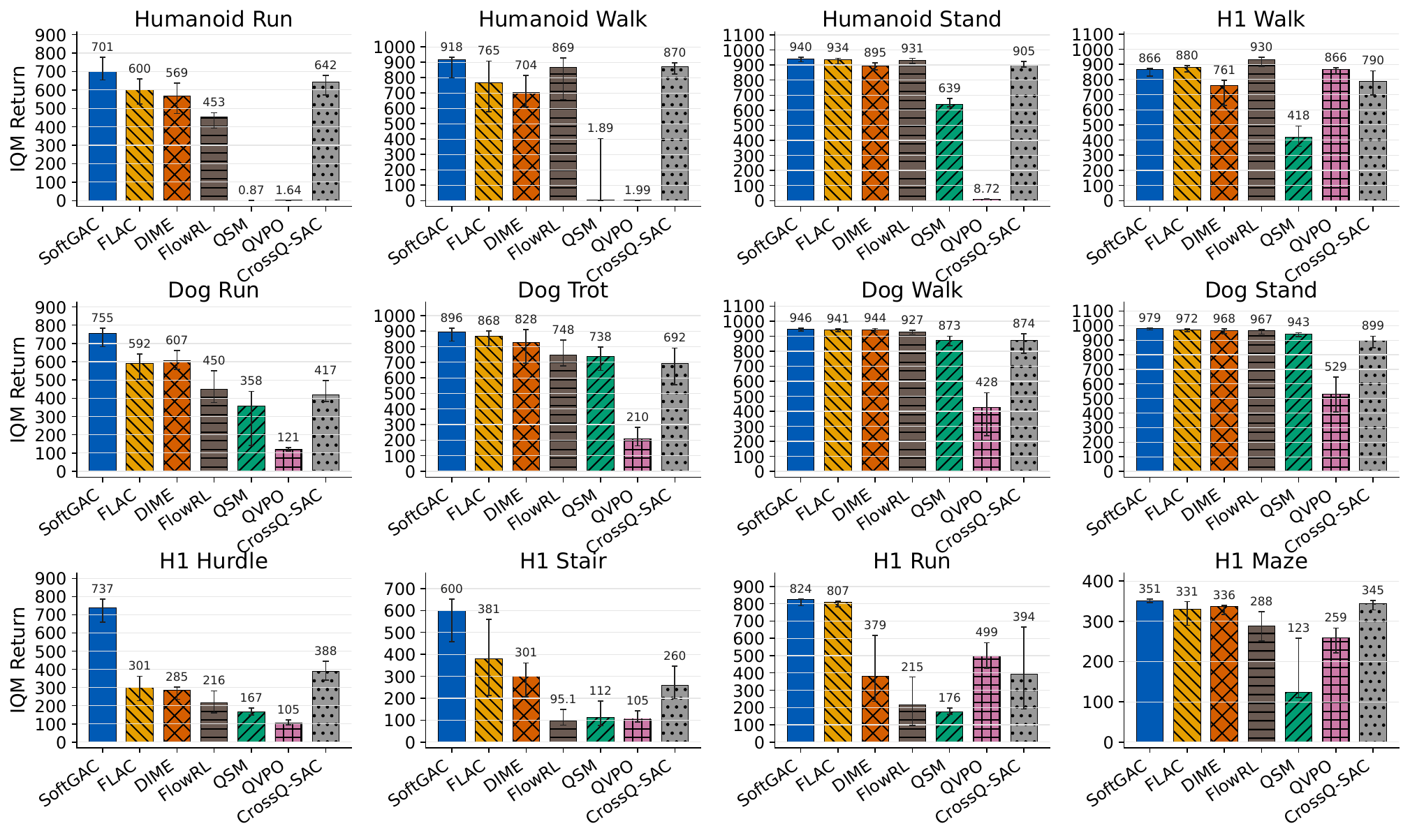}
  \caption{Final IQM return by task.}
  \label{fig:appendix_final_iqm_by_task}
\end{figure}

The full learning curves and final IQM summaries support the same conclusion. The main hard-task trends are not caused by a small task subset. \myalgo gives considerable gains on Humanoid Run, Dog Run, H1 Hurdle, H1 Stair and H1 Run, where high-dimensional locomotion and long-horizon credit assignment make the actor design important. On easier stand or walk tasks, several algorithms eventually reach strong returns, so the main difference is often sample efficiency rather than final solvability. The final bars report the IQM of the last plotted evaluation points. The curves show when the advantage appears during training.

\begin{figure}[!htbp]
  \centering
  \includegraphics[width=0.98\linewidth]{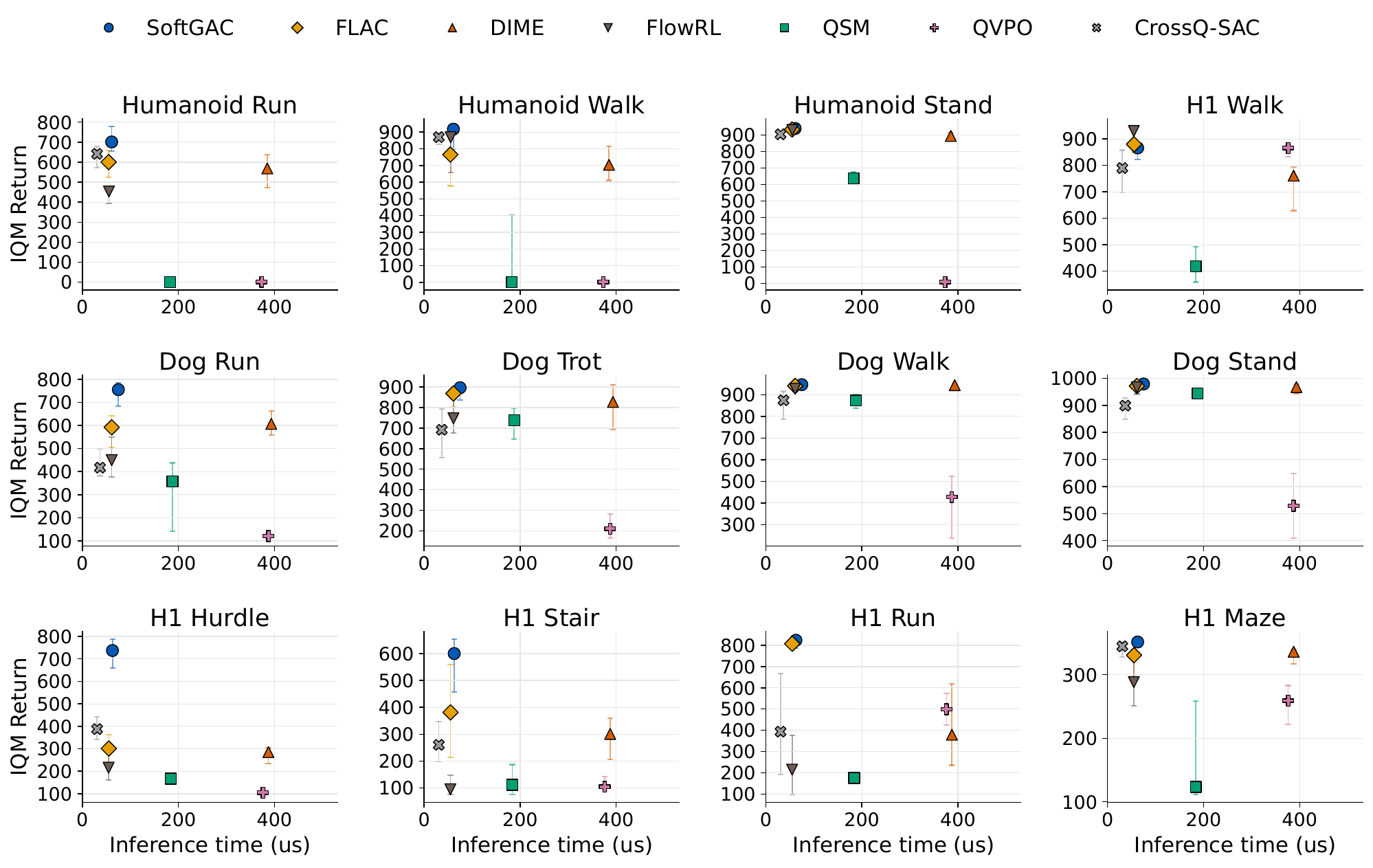}
  \caption{Compute-return tradeoff by task. The horizontal axis is per-action actor inference time, and the vertical axis is IQM return.}
  \label{fig:appendix_compute_return_tradeoff}
\end{figure}

The compute-return plots make the Pareto structure explicit. CrossQ-SAC has the cheapest Gaussian actor but lacks the multimodal generative policy class. DIME, QSM and QVPO sit in a higher-latency region because they evaluate iterative diffusion-style actors. \myalgo occupies a favorable region across the hard tasks because its actor remains close to one-pass inference cost while achieving higher or competitive IQM return.

\begin{figure}[!htbp]
  \centering
  \includegraphics[width=0.98\linewidth]{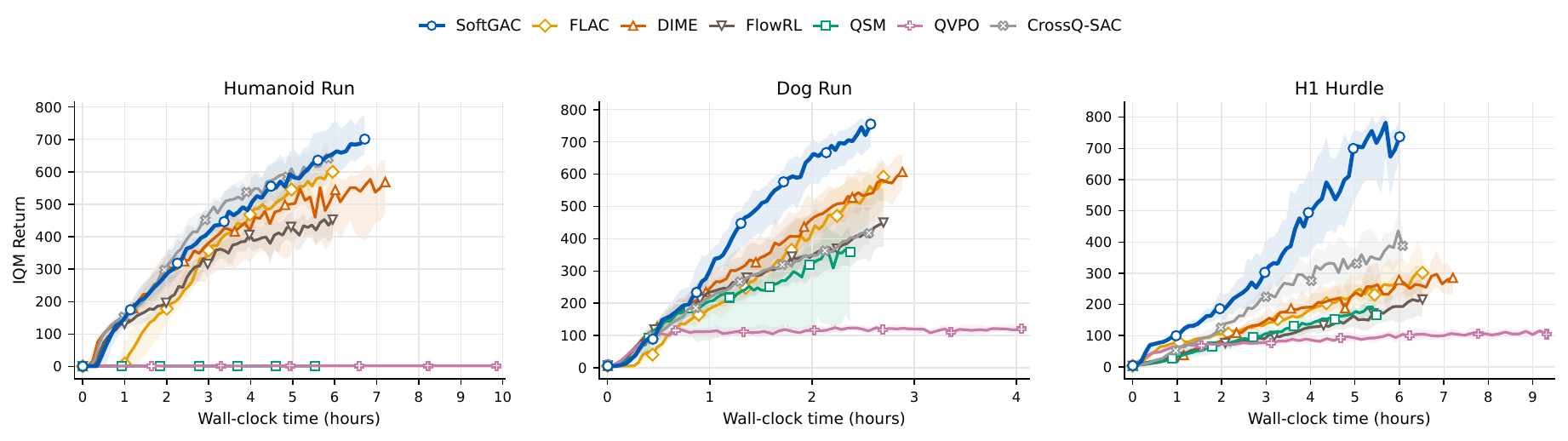}
  \caption{Wall-clock IQM learning curves on Humanoid Run, Dog Run and H1 Hurdle. The horizontal axis uses median seed wall-clock time at each evaluation step. All runs use the v6e-8 TPU VM, but the actual time depends on system scheduling and load.}
  \label{fig:appendix_wallclock_iqm_curves}
\end{figure}

The wall-clock curves provide a complementary but noisier view of the same result on three difficult tasks. Training wall-clock time depends on system scheduling, host load, TPU availability and run placement, and online RL also spends substantial time in CPU-side environment interaction. We therefore treat wall-clock as supporting evidence rather than the primary efficiency metric. We use median seed wall-clock at each environment step to reduce sensitivity to stragglers and system noise. Under this measurement, \myalgo still reaches high return within a wall-clock budget comparable to low-NFE flow baselines, while DIME and QVPO pay the cost of a many-step diffusion actor.

\begin{figure}[!htbp]
  \centering
  \includegraphics[width=0.92\linewidth]{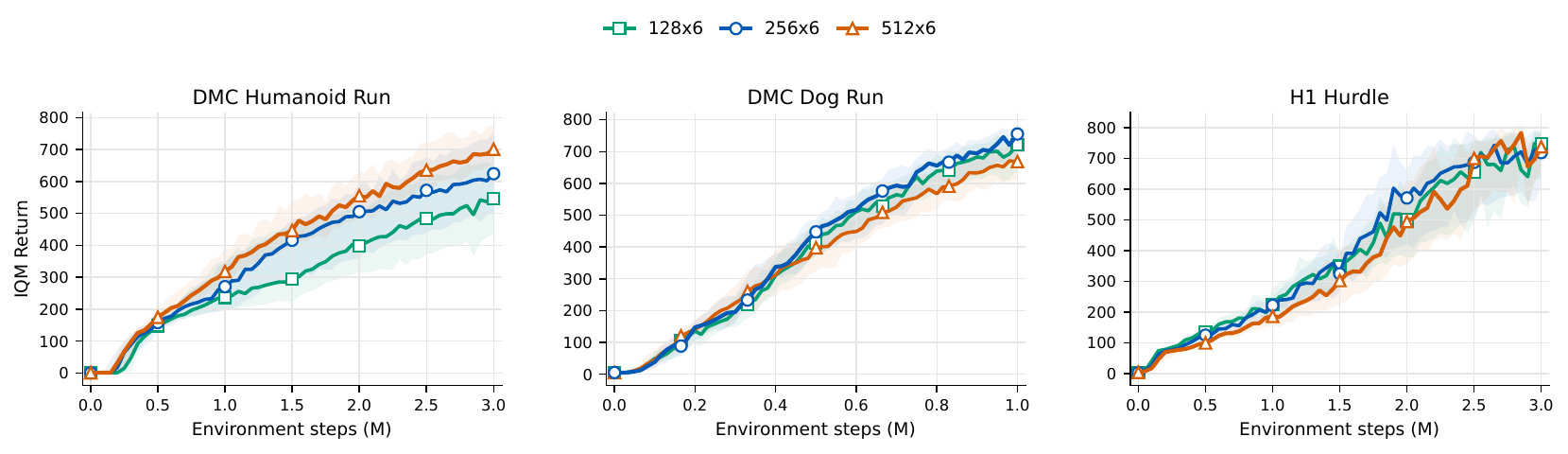}
  \caption{Actor-width sensitivity on Humanoid Run, Dog Run and H1 Hurdle.}
  \label{fig:appendix_actor_width_sensitivity}
\end{figure}

The width-sensitivity curves show that the selected actor sizes are not a fragile single setting. Increasing width generally improves or stabilizes learning on Humanoid Run and H1 Hurdle. On Dog Run, the smallest $128$-wide bridge already gives strong performance under a lower parameter budget, while wider actors do not provide a consistent gain. The main runs use width $512$ for all tasks except Dog, where we use width $256$ to keep the parameter count close to the baselines. The sensitivity curves suggest that fine-tuning actor could give a further boost on some tasks, but our goal is to verify the effectiveness of \myalgo under a comparable parameter budget.

\begin{figure}[!htbp]
  \centering
  \includegraphics[width=0.92\linewidth]{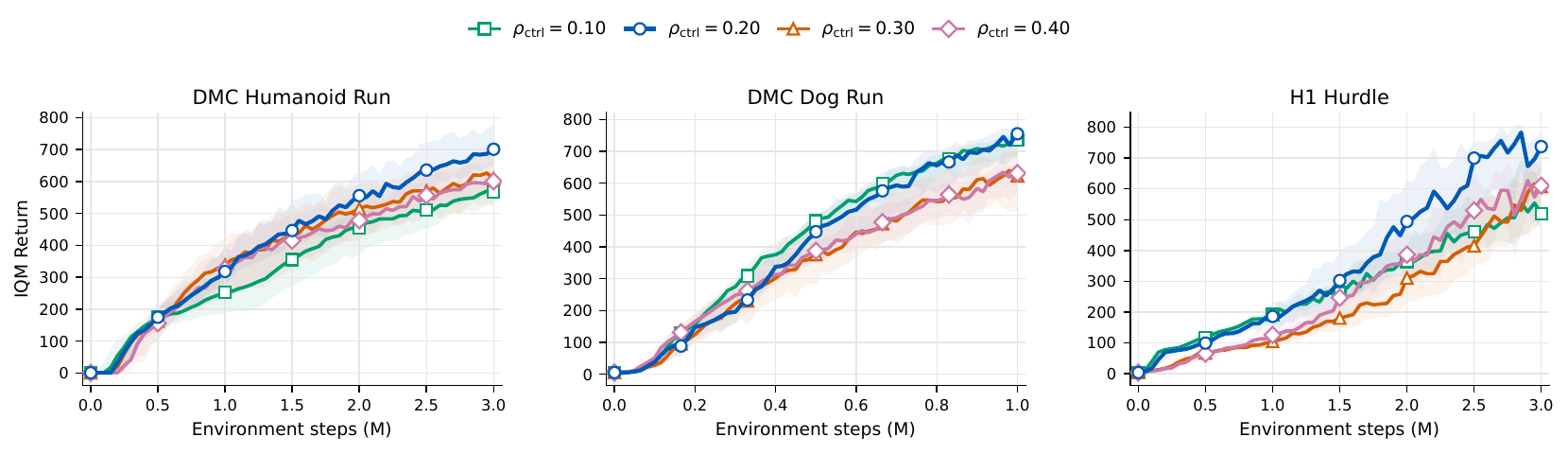}
  \caption{Target control-budget sensitivity on Humanoid Run, Dog Run and H1 Hurdle.}
  \label{fig:appendix_rho_sensitivity}
\end{figure}

The control-budget sensitivity shows the expected tradeoff. Small $\rho_{\mathrm{ctrl}}$ keeps the bridge close to the high-entropy reference and can limit value-guided concentration. Larger budgets allow stronger control, but overly large values do not always improve returns. The default $\rho_{\mathrm{ctrl}}=0.20$ is therefore a stable middle setting across these three domains rather than a value tuned to a single task, as discussed in Appendix~\ref{app:rho_ctrl_choice}. Overall, the sensitivity curves suggest that the algorithm is not fragile to this hyperparameter, and that tuning it could give a further boost on some tasks.

\begin{figure}[!htbp]
  \centering
  \includegraphics[width=0.52\linewidth]{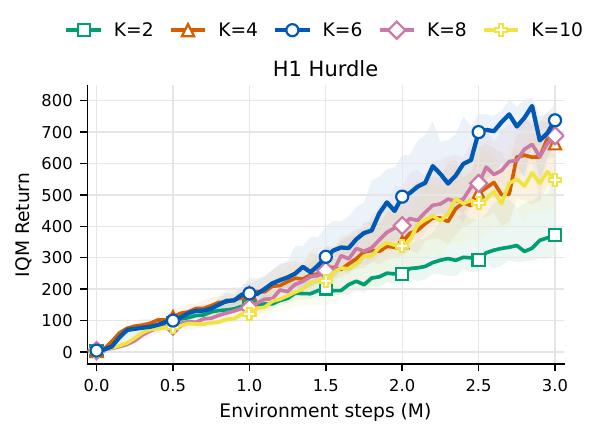}
  \caption{Bridge-depth sensitivity on H1 Hurdle with $K\in\{2,4,6,8,10\}$.}
  \label{fig:appendix_depth_sensitivity}
\end{figure}

The bridge-depth ablation complements the width study by varying the number of local transition blocks while keeping the width fixed. On H1 Hurdle, moving from $K=2$ to $K=4$ gives a clear improvement, and $K=6$ gives the best final performance in this sweep. This supports the role of a short but nontrivial path structure. With too few bridge steps, the actor has less room for value-guided transport and the finite-step reference has a larger endpoint bias. The default $K=6$ provides enough intermediate structure without turning the actor into a long iterative sampler. Increasing the depth beyond $K=6$ gives marginal or inconsistent gains on this task, possibly because deeper bridges are harder to optimize. We therefore use $K=6$ in the main runs. Deeper bridges may benefit from more stable training tricks and more trainable transition architectures, which we view as future work rather than the focus of this paper. Depth sensitivity can vary across tasks, but this ablation supports the broader principle that a reasonably short bridge can improve substantially over one-step transport while keeping inference efficient.

\endgroup

\end{document}